\documentclass{article}
\usepackage{iclr2026_conference,times}
\usepackage{hyperref}
\usepackage{url}
\usepackage{amsmath,amssymb,amsthm,bm,mathtools}
\usepackage{graphicx}
\usepackage{booktabs}
\usepackage{enumitem}
\usepackage{xcolor}
\usepackage{microtype}
\usepackage{multirow}
\usepackage{tikz}
\usepackage{soul}
\sethlcolor{yellow!30}
\usepackage{adjustbox}
\usepackage{tcolorbox}

\usetikzlibrary{positioning, shapes.geometric, arrows}

\newtheorem{theorem}{Theorem}
\newtheorem{definition}{Definition}
\newtheorem{assumption}{Assumption}
\newtheorem{proposition}{Proposition}
\newtheorem{lemma}{Lemma}
\newtheorem{corollary}{Corollary}

\newcommand{\E}{\mathbb{E}}

\newcommand{\Var}{\mathrm{Var}}
\newcommand{\KL}{\mathrm{KL}}
\newcommand{\R}{\mathbb{R}}
\newcommand{\Sph}{\mathbb{S}}

\newcommand{\ignore}[1]{}

 \ignore{
\newcommand{\enote}[1]{}
\newcommand{\ynote}[1]{}
\newcommand{\gnote}[1]{}
\newcommand{\rnote}[1]{}
}
\newcommand{\enote}[1]{{\color{blue}{[Eyal: #1]}}}
\newcommand{\ynote}[1]{{\color{red}{[Yossi: #1]}}}
\newcommand{\gnote}[1]{{\color{brown}{[Guy: #1]}}}
\newcommand{\rnote}[1]{{\color{green}{[Roy: #1]}}}

{\qed}

\renewcommand{\eqref}[1]{Eq.~(\ref{#1})}
\newcommand{\peqref}[1]{(Eq.~\ref{#1})}

\title{InfoNCE Induces Gaussian Distribution}

\author{
\makebox[\textwidth][c]{%
\begin{tabular}{c}
Roy Betser\thanks{These authors contributed equally to this work.\\ Corresponding author: \texttt{roybe@campus.technion.ac.il}} \quad
Eyal Gofer\footnotemark[1] \quad
Meir Yossef Levi \quad
Guy Gilboa \\
Technion - Israel Institute of Technology
\end{tabular}%
}
}

\iclrfinalcopy 

\begin{document}
\maketitle

\begin{abstract}
Contrastive learning has become a cornerstone of modern representation learning, allowing training with massive
unlabeled data for both task-specific and general (foundation) models.
A prototypical loss in contrastive training is InfoNCE and its variants. 
In this work, we show that the InfoNCE objective induces Gaussian structure in representations that emerge from contrastive training. 
We establish this result in two complementary regimes. 
First, we show that under certain  alignment and concentration assumptions, projections of the high-dimensional representation asymptotically approach a multivariate Gaussian distribution.
Next, under less strict assumptions, we show that adding a small asymptotically vanishing regularization term that promotes low feature norm and high feature entropy leads to similar asymptotic results.
We support our analysis with experiments on synthetic and CIFAR-10 datasets across multiple encoder architectures and sizes, demonstrating consistent Gaussian behavior.
This perspective provides a principled explanation for commonly observed Gaussianity in contrastive representations. 
The resulting Gaussian model enables principled analytical treatment of learned representations and is expected to support a wide range of applications in contrastive learning.
\end{abstract}

\section{Introduction}
Self-supervised learning with contrastive objectives has transformed modern representation learning, enabling scalable training of encoders without labels \citep{oord2018representation, chen2020simple, he2020momentum, radford2021learning}. Among these objectives, the InfoNCE loss balances two pressures: positive pairs are aligned while the batch is repelled to encourage uniformity \citep{wang2020understanding}. This uniformity is often described geometrically as “spreading out’’ the data on the hypersphere \citep{chen2021exploring}, but a deeper probabilistic question remains:
\emph{What is the actual distribution of representations trained with InfoNCE?}

Answering this question is not only of theoretical interest.
A Gaussian characterization is directly motivated by recent empirical findings suggesting that ``more Gaussian'' representations can correlate with improved downstream performance \citep{eftekhari2025importance}.
It also provides a principled basis for practical methods that model contrastive representations as Gaussians for tasks such as classification, uncertainty estimation and test-time adaptation~\citep{baumann2024post,morales2024bayesadapter}.
Moreover, assuming Gaussian structure makes quantities such as entropy, likelihood and KL divergences available in closed form, which underpins density-based diagnostics \citep{lee2018simple,betser2025whitened}.
These benefits are already exploited in applied work, with recent studies empirically observing and leveraging approximate Gaussian behavior in self-supervised representations \citep{baumann2024post,balestriero2025gaussian,betser2025general}.
Yet, despite these developments, a principled population-level explanation of why contrastive objectives such as InfoNCE give rise to Gaussian structure in representation space remains lacking.

Analyzing the \emph{population} InfoNCE objective, we formalize the emergence of asymptotically Gaussian representations through two complementary analytical routes.
A key ingredient is a novel alignment bound based on Hirschfeld-Gebelein-Rényi (HGR) maximal correlation, which limits achievable alignment according to augmentation mildness (Sec.~\ref{sec:align_bound}).
In the \emph{empirical idealization route}, motivated by empirical training dynamics, alignment reaches a plateau and the objective reduces to a constrained uniformity problem on the hypersphere; combined with norm concentration, this yields Gaussian structure for both normalized (to a unit norm) and unnormalized representations (Sec.~\ref{sec:gauss-plateau}).
In the \emph{regularized route}, a population-level analysis shows that adding a vanishing convex regularizer prioritizes the isotropic solution, yielding the same asymptotic Gaussian behavior without relying on training dynamics (Sec.~\ref{sec:gaus_reg}).
Together, these analyses shed light on why Gaussian structure can emerge under the InfoNCE objective at the population level.

We complement our theoretical analysis with empirical studies on synthetic data and CIFAR-10~\citep{krizhevsky2009learning} images, using encoders of increasing complexity: linear layers, MLPs with nonlinear activations, and ResNet-18~\citep{he2016deep}.
By comparing contrastive and supervised training, we isolate the role of the training objective, beyond effects of data or architecture.
We further observe similar Gaussian statistics in representations learned by general self-supervised foundation models, including DINO~\citep{caron2021emerging}, motivating a broader examination of Gaussian structure across self-supervised objectives.
Our \textbf{main contributions} are:

\begin{itemize}[leftmargin=1.25em, nosep]
\item \textbf{Bounded alignment.}  
In the large-batch limit, the alignment induced by the InfoNCE objective is bounded by the strength of the data augmentations.

\item \textbf{Uniformity on the sphere.} 
Along both routes we analyze, normalized representations converge toward the uniform distribution on the unit sphere.

\item \textbf{Asymptotic Gaussian structure.}  
Within this framework, both normalized and unnormalized representations admit asymptotically Gaussian behavior under the InfoNCE objective.

\item \textbf{Empirical support.}  
Accompanying our asymptotic analysis, we provide finite-dimensional empirical evidence on synthetic and real data, illustrating the emergence of Gaussian behavior across multiple settings and encoder architectures.

\end{itemize}

\begin{figure}[t]
    \centering
    \includegraphics[width=0.75\linewidth]{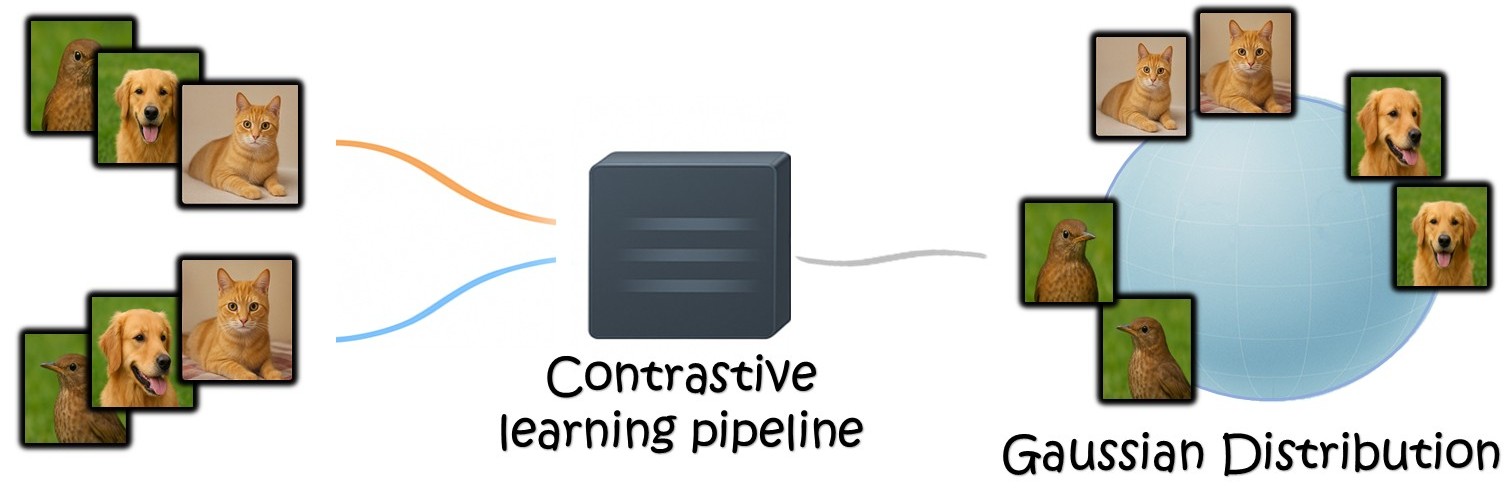}
    \caption{\textbf{Illustration.} Contrastive learning yields (approximately) Gaussian representations.}
    \label{fig:teaser}
\end{figure}

\section{Related Work}
\label{sec:related}
\paragraph{Contrastive learning and InfoNCE.}
The InfoNCE loss~\citep{oord2018representation} is the standard objective in self-supervised representation learning and underlies methods such as SimCLR~\citep{chen2020simple}, MoCo~\citep{he2020momentum}, and CLIP~\citep{radford2021learning}.
It balances alignment of positive pairs with batch-wise repulsion that promotes uniformity in representation space~\citep{wang2020understanding,chen2021exploring}.
Prior work has studied these effects from geometric, optimization, and distribution-matching perspectives, identifying phenomena such as hyperspherical uniformity and feature concentration~\citep{chen2021exploring,caron2021emerging,butakov2024efficient,draganov2025importance}.
Other empirical studies model contrastive representations as approximately Gaussian~\citep{baumann2024post,morales2024bayesadapter}.
However, the probabilistic law induced by the InfoNCE objective itself remains theoretically underexplored.

\paragraph{Isotropy and Gaussian structure.}
Several works aim to promote isotropic or Gaussian-like representations through explicit regularization or architectural design, including whitening-based objectives, variance-covariance control, and neural collapse phenomena~\citep{ermolov2021whitening,papyan2020prevalence,bardes2022vicregl}.
Related self-supervised approaches based on joint-embedding predictive architectures (JEPA) also yield highly regular representations and have been shown to encode density-related structure that can be exploited with Gaussian models~\citep{assran2023self,bardes2024revisiting,balestriero2025gaussian, balestriero2025lejepa}.
However, these works primarily observe or exploit Gaussian-like structure rather than explain its origin. Our work instead shows how Gaussianity emerges directly from the population InfoNCE objective.

\paragraph{Hyperspherical geometry and Gaussianity.}
A classical body of work studies the geometry of high-dimensional uniform measures on the sphere and their connection to Gaussian distributions \citep{vershynin2018high,wegner2021lecture}.
Related geometric ideas also appear in hyperspherical variational families and radial Bayesian priors, which leverage approximately uniform distributions over the hypersphere \citep{davidson2018hyperspherical,farquhar2020radial}.
A central result in this literature is the Maxwell-Poincar\'e spherical central limit theorem, which shows that fixed-dimensional projections of the uniform distribution on $\mathbb{S}^{d-1}$ converge to a Gaussian as the dimension grows \citep{maxwell1860ii,poincare1912calcul,diaconis1987dozen}.
Although developed independently of contrastive learning, these results provide the mathematical basis for why spherical uniformity induces Gaussian structure in high-dimensional representations. Our analysis connects this classical theory to contrastive learning by identifying regimes in which the InfoNCE objective induces such uniformity.

\paragraph{Additional theoretical perspectives.}
Complementary lines of work study theoretical properties of representations learned with contrastive objectives.
Identifiability analyses characterize when latent variables or semantic factors can be uniquely recovered under structural assumptions on the data-generating process~\citep{hyvarinen2016unsupervised,hyvarinen2019nonlinear,zimmermann2021contrastive,roeder2021linear, reizinger2024cross}; these results concern conditional or component-level structure and do not make claims about the marginal distribution of representations.
Separately, task-driven analyses establish class separability or clustering guarantees for contrastive representations~\citep{saunshi2019theoretical,haochen2021provable}, focusing on class-conditional geometry rather than the overall distribution. Concretely, class-specific clusters may remain well separated even when the overall embedding distribution is approximately Gaussian.
Our work does not address recovery or class structure; instead, it analyzes the marginal distribution induced by the population InfoNCE objective.

\section{Setup}
\label{sec:setup}

\paragraph{Data domain.}
Let $(\mathcal X,\mathcal{B}(\mathcal{X}))$ be a standard Borel space (a standard setting in probability) with a base probability $p_{\mathrm{base}}$.  
We draw $X_0\sim p_{\mathrm{base}}$ as a single data item (e.g., an image).  

\paragraph{Pairs via augmentation.}  
Contrastive learning is built around pairs of related examples rather than individual samples. 
To form such pairs, we use an \emph{augmentation channel} $\mathcal A$, which takes a base sample $X_0 \sim p_{\mathrm{base}}$ 
and produces stochastic variations of it. Formally, given $X_0$, we draw two independent augmentations
\begin{equation}
\label{eq:augmentations}
X, Y \;\sim\; \mathcal A(\cdot \mid X_0).
\end{equation}
Here $X$ and $Y$ are two views of the same underlying example (e.g., different crops or color jitter).
We denote by $p_X$ the marginal distribution of a single augmentation and assume it is nonatomic (a mild technical condition achievable in practice by infinitesimal dither). $p_{XY}$ denotes the joint distribution of a pair of augmentations $(X,Y)$.

\paragraph{InfoNCE loss.}
Let $f:\mathcal X \to \R^d$, $d \ge 2$, be a Borel-measurable encoder that maps input data to representations.
InfoNCE operates on $\ell_2$-normalized representations, defined as $\hat f(x) := f(x)/\|f(x)\|$ if $\|f(x)\|>0$, and $\hat f(x):=c_0$ for a fixed arbitrary $c_0\in\Sph^{d-1}$ otherwise.
Given a batch of $N$ paired augmentations $\{(x_i,y_i)\}_{i=1}^N$ drawn i.i.d.\ from $p_{XY}$, define
$u_i := \hat f(x_i)$ and $v_i := \hat f(y_i)$.
The empirical InfoNCE loss is
\begin{equation}
\label{eq:infonce-empirical}
\mathcal L_{\mathrm{InfoNCE}}
= - \frac{1}{N} \sum_{i=1}^N 
\log \frac{\exp\!\big(\tfrac{1}{\tau}\langle u_i, v_i\rangle\big)}
{\sum_{j=1}^N \exp\!\big(\tfrac{1}{\tau}\langle u_i, v_j\rangle\big)},
\end{equation}
with a fixed temperature $\tau>0$. 
Since $u_i$ and $v_j$ are unit-normalized, $\langle u_i,v_j\rangle$ equals cosine similarity. 
The numerator measures the similarity of the \emph{positive} pair $(u_i,v_i)$. 
The denominator compares each anchor $u_i$ to all candidates $\{v_j\}_{j=1}^N$, where $j\neq i$ serve as \emph{negatives}. 
This softmax encourages $u_i$ to rank its true partner highest while remaining distinct from negatives, preventing collapse.

\paragraph{Population InfoNCE.}  
The empirical InfoNCE loss in \eqref{eq:infonce-empirical} depends on the batch size $N$.  
As $N \to \infty$, the empirical averages converge to expectations. 
Let 
\begin{equation}
\label{eq:marginals}
\mu := \hat{f}_* p_X, 
\qquad 
\pi := (\hat f,\hat f)_* p_{XY},
\end{equation}
be the marginal distribution of representations and the joint distribution of positive pairs, respectively. 
Here $\hat f_* p_X$ denotes the \emph{pushforward measure} of $p_X$ by $\hat f$, which is the distribution of $\hat{f}(X)$. 
As shown by \citet[Theorem~1, Eq.~(2)]{wang2020understanding}, in the infinite-negatives limit {$N \to \infty$} the empirical InfoNCE loss (up to the additive $\log N$ term) converges to the following population functional.  
With $\alpha=1/\tau$ for fixed $\tau>0$:
\begin{equation}
\label{eq:pop}
\mathcal L(\mu,\pi)
\;=\; -\alpha\, \E_{(u,v)\sim\pi}[u\!\cdot\! v] 
\;+\; \Phi(\mu),
\qquad 
\Phi(\mu):= \E_{u\sim\mu}\,\log \E_{v\sim\mu} \exp\!\big(\alpha\, u\!\cdot\! v\big).
\end{equation}
The first term measures \emph{alignment} of positive pairs, while the second is a \emph{uniformity potential} depending only on $\mu$. 

\subsection{Alignment bound}
\label{sec:align_bound}
We now introduce a new term that quantifies the degree of augmentation.
The augmentation channel $\mathcal A$ limits how much positive-pair alignment can be induced.  
We quantify this with the \emph{augmentation mildness} parameter
\begin{equation}
\label{eq:eta2}
\eta_2
\;:=\;
\sup_{\substack{g\in L^2(p_X)\\ \Var(g)>0}}
\frac{\Var\big(\E[g(X)\mid X_0]\big)}{\Var(g(X))}
\ \in[0,1],
\end{equation}
which measures how predictable functions of the view $X$ are from the base $X_0$. 
This quantity equals the squared Hirschfeld-Gebelein-Rényi (HGR) maximal correlation, denoted $\rho_m(X,X_0)$, i.e., $\eta_2=\rho_m^2(X,X_0)$~\citep{hirschfeld1935connection,gebelein1941statistische,renyi1959measures} (see Appendix~\ref{app:hgr:def}). Intuitively, \( \eta_2 = 0 \) when \( X \) is (effectively) independent of \( X_0 \) (very strong/noisy augmentations), and $\eta_2=1$ when $X$ is fully determined by $X_0$ (no augmentation noise).

\textbf{Example.}
Consider the Gaussian channel
\( X = A X_0 + \sqrt{1 - A^2}\,\varepsilon \),
where \( X_0 \sim \mathcal N(0,1) \) and \( \varepsilon \sim \mathcal N(0,1) \) are independent.
In this case, $X$ and $X_0$ are jointly Gaussian with Pearson correlation $A$, the maximal correlation satisfies $\rho_m(X,X_0)=|A|$, and thus $\eta_2 = A^2$ (Appendix~\ref{app:hgr:gauss}).

\begin{proposition}[Augmentation-controlled alignment bound]
\label{prop:alignment-bound}
Let \( X, Y \sim \mathcal A(\cdot \mid X_0) \) be conditionally independent given the base sample \( X_0 \), and let \( u = \hat f(X) \), \( v = \hat f(Y) \) be normalized representations in \( \Sph^{d-1} \), i.e., \( \|u\| = \|v\| = 1 \). Then
\begin{equation}
\label{eq:alignment_bound}
\E_{(u,v)\sim\pi}[u \cdot v]\;\le\;\eta_2\;+\;(1-\eta_2)\,\|m(\mu)\|^2,
\qquad
m(\mu) := \E[u] = \E[v],
\end{equation}
where \( \eta_2 = \rho_m^2(X, X_0) \) is the squared HGR maximal correlation between the view and the base, and \( \mu \) is the marginal law of \( u \). 
\end{proposition}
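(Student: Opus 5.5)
The plan is to condition on the base sample and reduce to a coordinatewise variance bound. Write $u=\hat f(X)$, $v=\hat f(Y)$, and let $h(x_0):=\E[\hat f(X)\mid X_0=x_0]\in\R^d$. Since $X$ and $Y$ are conditionally independent given $X_0$ with the same conditional law $\mathcal A(\cdot\mid X_0)$, the tower property gives
\begin{equation*}
\E[u\cdot v]=\E\big[\E[u\mid X_0]\cdot\E[v\mid X_0]\big]=\E\|h(X_0)\|^2 .
\end{equation*}
Decompose this around the mean: $\E\|h(X_0)\|^2=\|\E h(X_0)\|^2+\sum_{k=1}^d\Var(h_k(X_0))$. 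Because $\E h(X_0)=\E[u]=m(\mu)$, the first term is $\|m(\mu)\|^2$. Note also that $\E[v]=m(\mu)$ because $Y$ has the same marginal $p_X$.

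Next, bound each coordinate using the definition of $\eta_2$ in \eqref{eq:eta2}. For coordinate $k$, set $g_k:=\hat f_k$. It is bounded, hence lies in $L^2(p_X)$, and $h_k(X_0)=\E[g_k(X)\mid X_0]$. If $\Var(g_k(X))>0$, the supremum defining $\eta_2$ gives $\Var(h_k(X_0))\le\eta_2\Var(g_k(X))$. If $\Var(g_k(X))=0$, then $g_k(X)$ is a.s.\ constant, so $h_k$ is a.s.\ constant and the inequality holds trivially. Summing over $k$ and using $\|u\|=1$ yields
\begin{equation*}
\sum_k\Var(h_k(X_0))\le\eta_2\sum_k\Var(u_k)=\eta_2\big(\E\|u\|^2-\|m(\mu)\|^2\big)=\eta_2\big(1-\|m(\mu)\|^2\big).
\end{equation*}

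Combining the two steps gives $\E[u\cdot v]\le\|m(\mu)\|^2+\eta_2(1-\|m(\mu)\|^2)=\eta_2+(1-\eta_2)\|m(\mu)\|^2$, which is \eqref{eq:alignment_bound}. I do not expect a serious obstacle. The only delicate points are technical. One is justifying the conditional-independence factorization $\E[u\cdot v\mid X_0]=h(X_0)\cdot h(X_0)$, which needs regular conditional distributions; these are available because $\mathcal X$ is standard Borel. The other is the degenerate zero-variance case handled above. Beyond that, the key insight is the observation that $\eta_2$ controls the variance of conditional expectations, applied coordinatewise to the bounded vector function $\hat f$.
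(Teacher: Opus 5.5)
Your proof is correct, and it takes a different route from the paper's. The paper keeps the two views separate. It writes $\E[u\cdot v]=\|m(\mu)\|^2+\E[\tilde u\cdot\tilde v]$ with centered residuals. It then bounds the dependence between the views by the multiplicative data-processing inequality $\rho_m(X,Y)\le\rho_m(X,X_0)\,\rho_m(Y,X_0)=\eta_2$. Finally it applies the HGR definition with Cauchy--Schwarz coordinatewise, and Cauchy--Schwarz once more across coordinates.

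You instead condition on $X_0$ and use that both views share the channel and the encoder, so $\E[u\mid X_0]=\E[v\mid X_0]=h(X_0)$. This turns alignment into the exact identity $\E[u\cdot v]=\|m(\mu)\|^2+\sum_k\Var(h_k(X_0))$. After that, the explained-variance definition \eqref{eq:eta2} applies directly to each bounded coordinate $\hat f_k$.

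Your route is more self-contained. It needs neither the equivalence between \eqref{eq:eta2} and the correlation form of $\rho_m$, nor the DPI, nor either Cauchy--Schwarz step. Your identity also gives two things for free: the lower bound $\E[u\cdot v]\ge\|m(\mu)\|^2$, and a transparent equality case (equality holds exactly when each non-constant coordinate $\hat f_k$ attains the supremum in \eqref{eq:eta2}).

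The paper's route controls $\E[g(X)h(Y)]$ for arbitrary functions of the two views. It does not rely on the two views having equal conditional means given $X_0$. So it carries over with only notational changes to asymmetric settings, such as different augmentation channels or different encoders for the two views. There your identity would need an additional Cauchy--Schwarz step.
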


\noindent The proof appears in Appendix~\ref{app:hgr:align-proof}. This bound links the alignment of positive pairs to the structure of the statistical dependence induced by the augmentation channel. 
While HGR maximal correlation has been studied in statistical dependence analysis~\citep{huang2020sample,zhang2024hgr}, it has not previously been used to control alignment in contrastive learning. Existing work studies augmentations empirically (e.g., \citet{tian2020makes}) but does not derive bounds of this form.
This result formalizes how the strength of data augmentations fundamentally constrains achievable alignment under the InfoNCE objective.

\section{Gaussianity from InfoNCE}
\label{sec:gauss-routes}

We study why minimizing the population InfoNCE objective~\peqref{eq:pop} yields (approximately) Gaussian low-dimensional projections of learned representations, for both \emph{normalized} representations on the sphere and \emph{unnormalized} representations in $\R^d$. Our analysis proceeds along two complementary routes, which differ in the strength of the assumptions they require.

\emph{Empirical idealization.}
We first analyze an idealized regime with infinite data, ambient dimension $d \to \infty$, and sufficient optimization. Guided by empirical observations, we assume \emph{alignment plateau} and \emph{thin-shell concentration}; these assumptions enable a simple derivation of Gaussian projections.

\emph{Regularized route.}
To reduce reliance on training dynamics, we study a regularized variant of the population objective. Introducing a vanishing convex regularizer and assuming attainable alignment at uniformity ensures a unique minimizer and yields the same asymptotic Gaussian structure. This route provides an alternative explanation independent of training behavior.

\subsection{Gaussian projections at alignment plateau}
\label{sec:gauss-plateau}

Proposition~\ref{prop:alignment-bound} provides an upper bound on achievable alignment. In the sequel we do not assume this bound is tight; instead, we model training as reaching a plateau that lies strictly below the bound.
\begin{assumption}[Alignment plateau]
\label{assump:plateau}
After sufficient training, the positive-pair alignment saturates at a ceiling; concretely,
\begin{equation}
\label{eq:alignment-plateau}
\E_{(u,v)\sim\pi}[u\!\cdot\! v]\;=\;\eta_2 \;+\;  r_{\mathrm{plat}},
\end{equation}

where $r_{\mathrm{plat}} \le 0$ is a constant error term representing the difference between the alignment value at plateau and the maximal correlation defined by the augmentations ($\eta_2$).
\end{assumption}

Empirically, alignment saturation has been reported in some contrastive-learning settings \citep{wang2020understanding}, which motivates considering a plateau model as a plausible scenario rather than a universal requirement. In our experiments (Fig.~\ref{fig:uni_align_violin}, Appendix Figs.~\ref{fig:uni_align_hist_b}, \ref{fig:uni_align_hist_d}), we frequently observe high alignment alongside improving uniformity with larger dimensions and batch sizes, suggesting that alignment may saturate before uniformity in at least some regimes. An extension that places the plateau exactly at the alignment bound \peqref{eq:alignment_bound} is discussed in Appendix~\ref{app:exact_align}.

\begin{corollary}[Gaussian $k$-projections at the plateau]
\label{cor:gauss-k-simple}
Suppose the alignment plateau condition \peqref{eq:alignment-plateau} holds, and consider the population objective \peqref{eq:pop}.  
Let $\mu^*$ denote the global minimizer supported on $\Sph^{d-1}$. Then, as $d \to \infty$, for every fixed $k \geq 1$ the $k$-dimensional marginal of $u \sim \mu^*$ satisfies
\begin{equation}
\label{eq:gauss-k-unif}
\sqrt{d}\,u_k \;\Rightarrow\; \mathcal N(0, I_k),
\end{equation}
where $u_k$ denotes the projection of $u$ onto a fixed $k$-dimensional coordinate subspace and $I_k$ is the $k \times k$ identity matrix.
\end{corollary}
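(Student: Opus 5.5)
The plan is to reduce the corollary to two steps: first show that the plateau assumption turns the population problem into a pure uniformity problem whose minimizer is the uniform law $\sigma_{d-1}$ on $\Sph^{d-1}$, and then apply the Maxwell--Poincar\'e spherical central limit theorem to that law.

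\emph{Step 1 (decoupling).} Under Assumption~\ref{assump:plateau}, the alignment term in \peqref{eq:pop} equals the constant $-\alpha(\eta_2+r_{\mathrm{plat}})$ for every admissible pair $(\mu,\pi)$. Hence minimizing $\mathcal L(\mu,\pi)$ is equivalent to minimizing $\Phi(\mu)$ over probability measures on $\Sph^{d-1}$. The alignment bound of Proposition~\ref{prop:alignment-bound} is used only to confirm that the plateau value is consistent: it lies below $\eta_2+(1-\eta_2)\|m(\mu)\|^2$, and at $m(\mu)=0$ this still leaves room for any $r_{\mathrm{plat}}\le 0$. So the constraint does not exclude centered measures such as $\sigma_{d-1}$.

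\emph{Step 2 (minimizer of $\Phi$).} I would show $\Phi(\mu)\ge\Phi(\sigma_{d-1})$, with equality only at $\mu=\sigma_{d-1}$.
\begin{itemize}
\item By Jensen's inequality applied to the concave $\log$ (outer expectation), $\Phi(\mu)\le\log\iint e^{\alpha u\cdot v}\,d\mu\,d\mu$. This goes the wrong way, so I would instead use the rotation-averaging argument, which I expect to be the main obstacle.
\item Write $g_\mu(u)=\int e^{\alpha u\cdot v}d\mu(v)$. Since $\Phi(\mu)=\int\log g_\mu\,d\mu$, the first variation at $\sigma_{d-1}$ is constant, because $g_{\sigma}$ is constant on the sphere by rotation invariance. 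So $\sigma_{d-1}$ is a critical point.
\item For global minimality I would follow Wang--Isola (Theorem~1 of \citet{wang2020understanding}), whose limit analysis identifies $\sigma_{d-1}$ as the unique minimizer of this uniformity potential. I would cite that result.
\item Failing that, I would argue via the positive-definite kernel $e^{\alpha u\cdot v}$. Its Gegenbauer coefficients are all positive, so $\iint e^{\alpha u\cdot v}d\mu\,d\mu\ge\iint e^{\alpha u\cdot v}d\sigma\,d\sigma$, with equality iff $\mu=\sigma$ (strict positive definiteness). I would then try to transfer this to the log-form through convexity in $\mu$ of the map $\mu\mapsto\int\log g_\mu\,d\mu$ restricted along rotation averages: by concavity of $\log$, averaging $\mu$ over the Haar measure on $O(d)$ cannot increase $\Phi$ along the relevant direction.
\end{itemize}
If this transfer is delicate, the fallback is to state uniqueness as inherited from Wang--Isola.

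\emph{Step 3 (Gaussian projections).} With $\mu^*=\sigma_{d-1}$, represent $u=g/\|g\|$ with $g\sim\mathcal N(0,I_d)$. Then
\[
\sqrt d\,u_k=\frac{g_k}{\|g\|/\sqrt d},
\]
where $g_k\sim\mathcal N(0,I_k)$ exactly. By the law of large numbers $\|g\|/\sqrt d\to1$ almost surely, so Slutsky's lemma gives $\sqrt d\,u_k\Rightarrow\mathcal N(0,I_k)$ for each fixed $k$. This is the Maxwell--Poincar\'e theorem \citep{diaconis1987dozen}, and it completes the proof.
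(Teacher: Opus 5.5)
Your proposal follows the paper's proof. The plateau makes the alignment term constant, so the problem reduces to minimizing $\Phi$; the unique minimizer $\sigma$ is taken from Wang--Isola exactly as the paper does; and the Maxwell--Poincar\'e limit finishes the argument (you prove it via $u=g/\|g\|$, the law of large numbers and Slutsky, where the paper cites Diaconis--Freedman). Your Haar-averaging fallback in Step~2 would not work as sketched: concavity of $\log$ gives $\log g_\sigma \ge \int \log g_{R_*\mu}\,dR$, which bounds $\Phi(\sigma)$ from below rather than above. So the argument genuinely rests on the cited Wang--Isola result, and also, as the paper remarks, on the plateau value being feasible at $\mu=\sigma$, which the upper bound alone does not guarantee.
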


The proof is provided in Appendix~\ref{app:plateau_proof_norm} and follows from two lemmas. The first establishes that $\Phi(\mu)$ attains a global minimum at the uniform law \citep{wang2020understanding}, while the second invokes the central limit theorem on the sphere \citep{diaconis1987dozen} to deduce Gaussian projections.

\subsubsection{Gaussian projections for unnormalized representations.}
So far we analyzed normalized representations on the sphere. We now extend the result to the original, unnormalized encoder outputs $z=f(X)\in\R^d$.  
Write $z = r u$, where $r=\|z\|$ is the representation radius and $u=z/\|z\|\in\Sph^{d-1}$ the normalized direction.

\begin{assumption}[Thin-shell concentration]
\label{assump:thin-shell}
We assume the representation radius concentrates:
\begin{equation}
\label{eq:thin-shell}
\frac{r}{r_0} \;\xrightarrow[d\to\infty]{}\; 1,
\end{equation}
where $r_0\in(0,\infty)$ is a deterministic constant.
\end{assumption}

Norm concentration is widely observed in contrastive learning: unnormalized representations cluster around a characteristic radius \citep{wang2020understanding,haochen2021provable,levi2024double}. This thin-shell effect~\citep{klartag2023logarithmic} is further promoted by weight decay, which penalizes norm growth and stabilizes a common scale. In particular, \citet{draganov2025importance} show that appropriate weight decay suppresses norm inflation and tightens the dispersion of representation norms, lending empirical support to Assumption~\ref{assump:thin-shell}. Consistent with these reports, our experiments exhibit progressively sharper radius histograms as dimension and batch size increase (Figs.~\ref{fig:laplace_gauss}, \ref{fig:cifar_curves}, \ref{fig:thin_shell}).

\begin{proposition}[Gaussian projections for unnormalized representations]
\label{prop:unnorm-unif}
Let $z=f(x)\in\R^d$ be the unnormalized representation and $u:=z/\|z\|$. 
Assume $u\sim\sigma$ (the uniform distribution on $\Sph^{d-1}$) and that Assumption~\ref{assump:thin-shell} holds, i.e., $r\xrightarrow[d\to\infty]{} r_0\in(0,\infty)$. 
Then for any fixed $k$-dimensional subspace,
\begin{equation}
\label{eq:gauss-unnorm-unif}
\sqrt{d}\,z_k \;\Rightarrow\; \mathcal N\!\big(0,\,r_0^2 I_k\big)
\qquad (d\to\infty),
\end{equation}
where $z_k$ denotes the orthogonal projection of $z$ onto that subspace and $I_k$ is the $k\times k$ identity.
\end{proposition}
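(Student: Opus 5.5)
The plan is to write the projected unnormalized vector as a product and apply Slutsky's theorem. Let $P_k$ denote the orthogonal projection onto the fixed $k$-dimensional subspace, so that $z_k = P_k z = r\,P_k u$. Then
\begin{equation*}
\sqrt{d}\,z_k \;=\; \frac{r}{r_0}\cdot r_0\,\sqrt{d}\,P_k u .
\end{equation*}
The factor $r/r_0$ converges to $1$ by Assumption~\ref{assump:thin-shell}, which I read as convergence in probability to a constant. The factor $\sqrt{d}\,P_k u$ is governed by the spherical central limit theorem.

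\textbf{Step 1: projections of the uniform law.} Since $u\sim\sigma$ and $\sigma$ is rotation invariant, I choose an orthogonal $Q$ carrying the subspace onto the first $k$ coordinates. Then $P_k u$ has the same law, in coordinates, as the first $k$ entries of $u$. The Maxwell--Poincar\'e theorem, already used in Corollary~\ref{cor:gauss-k-simple}, then gives $\sqrt{d}\,P_k u\Rightarrow\mathcal N(0,I_k)$.

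If a self-contained argument is preferred, I would use the representation $u = g/\|g\|$ with $g\sim\mathcal N(0,I_d)$. This gives
\begin{equation*}
\sqrt{d}\,u_{1:k} = \frac{g_{1:k}}{\|g\|/\sqrt{d}},
\end{equation*}
and $\|g\|/\sqrt{d}\to 1$ almost surely by the law of large numbers. Slutsky's theorem then yields the limit.

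\textbf{Step 2: the radius factor.} Multiplying by the deterministic constant $r_0$ gives $r_0\sqrt{d}\,P_k u\Rightarrow\mathcal N(0,r_0^2 I_k)$. Slutsky's theorem for a product in which one factor converges in probability to the constant $1$ then gives \eqref{eq:gauss-unnorm-unif}.

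The key point is that Slutsky requires no independence between $r$ and $u$. The only requirement is that $r/r_0\to 1$ in probability; this would fail, for instance, if $r$ merely converged in distribution to a nondegenerate law.

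\textbf{Main obstacle.} The delicate step is interpreting the hypotheses precisely. The objects live on a sequence of spaces indexed by $d$, so Assumption~\ref{assump:thin-shell} must be read as $r^{(d)}/r_0\to 1$ in probability. Likewise, ``$u\sim\sigma$'' must hold for each $d$, or at least the projected laws must match those of $\sigma_{d-1}$ asymptotically.

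I will state these readings explicitly at the start of the proof. The event $\|z\|=0$, where $\hat f$ is defined by convention, needs no separate treatment: by thin-shell concentration its probability tends to $0$, and on that event $z_k=0$, so it does not affect the weak limit.
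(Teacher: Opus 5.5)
Your proposal is correct and matches the paper's proof: write $\sqrt{d}\,z_k = r\,\sqrt{d}\,P_k u$, get $\sqrt{d}\,P_k u \Rightarrow \mathcal N(0,I_k)$ from the Maxwell--Poincar\'e spherical CLT (Lemma~\ref{lem:maxwell}), and conclude by Slutsky's theorem, reading Assumption~\ref{assump:thin-shell} as $r\to r_0$ in probability exactly as the paper does. Your additional details are sound refinements that the paper leaves implicit: rotation invariance to reduce a general subspace to coordinates, the self-contained Gaussian representation $u=g/\|g\|$, and the negligible event $\|z\|=0$.
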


See proof in Appendix~\ref{app:plateau_proof_unnorm}.

\begin{figure}[t]
    \centering
    \includegraphics[width=0.9\linewidth]{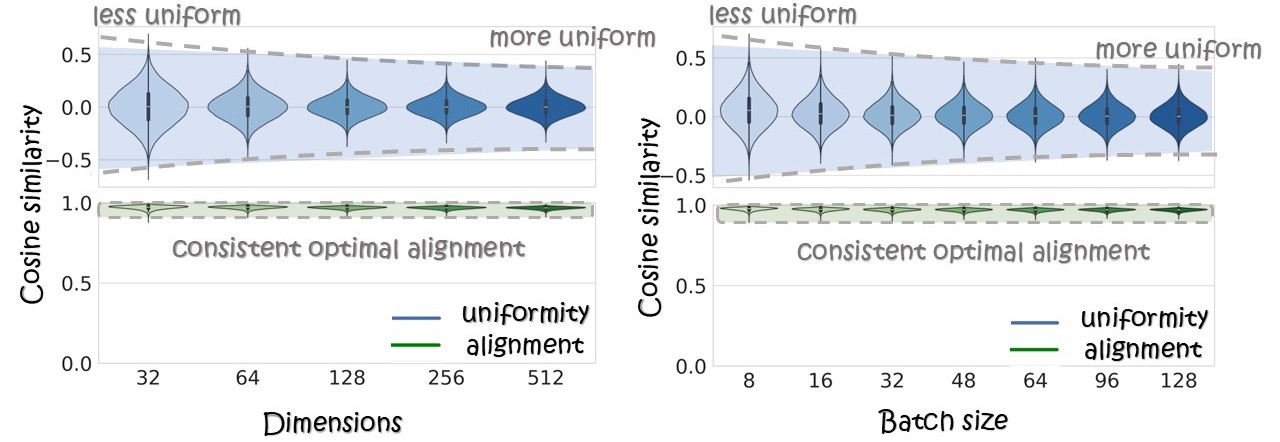}
    \caption{\textbf{Uniformity vs.\ alignment across settings.} A simple linear encoder trained on synthetic Laplace data exhibits (i) near-optimal alignment across all configurations and (ii) steadily improving uniformity as batch size or dimensionality grow.}
    \label{fig:uni_align_violin}
\end{figure}

\subsection{Gaussian projections using regularization}
\label{sec:gaus_reg}

Proposition~\ref{prop:alignment-bound} shows that alignment is limited by the augmentation channel \eqref{eq:alignment_bound}.
At the uniform distribution ($\mu=\sigma$) the mean vanishes, $m(\sigma)=0$, and the bound reduces to $\E[u\!\cdot\! v] \le \eta_2$.  
Assuming this ceiling is attainable at uniformity, the uniform distribution becomes asymptotically optimal for the population objective.
We work in a regularized setting, where the regularization vanishes as $d\to \infty$. As before, this has direct implications to the representation projections, which are approximately Gaussian (Theorem~\ref{thm:conv_rate}). This result shows that Gaussianity can be obtained without relying on the stronger thin-shell or plateau conditions.

We constrain $f$ to take values in $B\subseteq\R^d$, which is either some closed ball centered at $0$ with positive radius or $\R^d$. 
We take the original loss and add two new losses: one to penalize large squared norms, and the other to encourage high entropy (we comment that both are commonly regarded as desirable goals, irrespective of our setup). Specifically, for fixed $\beta,\lambda>0$,
\begin{equation}
\label{eq:angkl}
J(f)
=\Phi(\mu)-\alpha\E_{(u,v)\sim\pi}[u\cdot v]
+\beta(-H(\rho)+\lambda\E_{Z\sim\rho}\|Z\|^2)\;,
\end{equation}
where $\rho = f_*p_X$ is the unnormalized pushforward probability. Define the truncated Gaussian $\gamma_\lambda^B$,
\begin{equation}
\gamma_\lambda^{B}(dz)=c_{B,\lambda}e^{-\lambda\|z\|^2}\mathbf 1_{B}(z)dz\;,\qquad
c_{B,\lambda}^{-1}=\int_{B}e^{-\lambda\|z\|^2}\,dz\;.
\end{equation}
If $\rho\ll \gamma_\lambda^{B}$ ($\ll$ denotes absolute continuity, so $\rho$ is absolutely continuous with respect to $\gamma_\lambda^{B}$), then
\begin{equation}
\KL(\rho\|\gamma_\lambda^{B})
=\int \log\frac{d\rho}{dz}\,d\rho-\int\log\frac{d\gamma_\lambda^{B}}{dz}\,d\rho
=-H(\rho)\;+\;\lambda\,\E_\rho\|Z\|^2\;+\;\log c_{B,\lambda}^{-1}\;,
\end{equation}
that is, equality up to an additive constant. Since $\rho(B)=1$, if $\rho\not\ll \gamma_\lambda^{B}$, then both $\KL(\rho\|\gamma_\lambda^{B})$ and $-H(\rho)$ are $+\infty$. 
Thus, it is equivalent to minimize 
\begin{equation}\label{eq:with kl1}
J(f)
=\Phi(\mu)-\alpha\E_{(u,v)\sim\pi}[u\cdot v]
+\beta \KL(\rho\|\gamma_\lambda^{B})\;,
\end{equation}
and we thereby also implicitly restrict $\rho$ to satisfy $\rho\ll \gamma_\lambda^{B}$ and in particular $\rho(B)=1$.

\ignore{

}

Our goal is to prove that for $\beta\geq \beta_0$, taking the angular probability as $\sigma$ approaches optimality and the optimal radial probability is that of $\gamma_\lambda^B$. If $B=\R^d$, this means that a Gaussian $\rho$ approaches optimality. Furthermore, as $d\to\infty$, $\beta_0\to 0$.

This will be done in several steps.
First, $\rho$ can be decomposed into a radial part and an angular part. We show that the radial part can be chosen optimally in a straightforward way. 

\begin{proposition}\label{prop:radial optimal}
 Let $\rho(dz)=\mu(du)\kappa(dr\mid u)$ and $\gamma_\lambda^{B}(dz)=\sigma(du)\xi(dr\mid u)$ in polar coordinates $z=ru$. Then $\kappa = \xi$ is an optimal choice, yielding $\KL(\rho\Vert\gamma_\lambda^{B})=\KL(\mu\Vert\sigma)$. 
 \end{proposition}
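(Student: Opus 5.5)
The plan is to apply the chain rule for KL divergence under the polar-coordinate disintegration $z = ru$. The map $z\mapsto (\|z\|, z/\|z\|)$ is a measurable bijection from $\R^d\setminus\{0\}$ onto $(0,\infty)\times\Sph^{d-1}$. Since $\rho\ll\gamma_\lambda^B$, we have $\rho(\{0\})=0$, so both $\rho$ and $\gamma_\lambda^B$ can be viewed as laws on the product space. By standard Borel disintegration, $\rho(dz)=\mu(du)\kappa(dr\mid u)$. Here $\mu$ is the pushforward under normalization, which coincides with $\hat f_*p_X$ on $\{f\neq 0\}$.

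For the Gaussian reference, I will compute the disintegration explicitly. Because $e^{-\lambda\|z\|^2}\mathbf 1_B(z)$ is rotation invariant (as $B$ is a centered ball or $\R^d$), writing $dz = r^{d-1}dr\,\mathcal H^{d-1}(du)$ gives angular marginal $\sigma$ and radial kernel
\[
\xi(dr\mid u)=\xi(dr)\propto r^{d-1}e^{-\lambda r^2}\mathbf 1_{\{r\le R\}}\,dr ,
\]
which does not depend on $u$.

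The key step is the chain rule
\[
\KL(\rho\Vert\gamma_\lambda^B)=\KL(\mu\Vert\sigma)+\int \KL\big(\kappa(\cdot\mid u)\,\Vert\,\xi\big)\,\mu(du).
\]
If $\mu\ll\sigma$ and $\kappa(\cdot\mid u)\ll\xi$ for $\mu$-a.e.\ $u$, this follows by writing $\frac{d\rho}{d\gamma}(r,u)=\frac{d\mu}{d\sigma}(u)\,\frac{d\kappa(\cdot\mid u)}{d\xi}(r)$ and integrating the logarithm. This requires some care with measurability of the kernel density, which holds on standard Borel spaces. If either absolute continuity fails, both sides equal $+\infty$. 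Since the conditional KL term is nonnegative and vanishes exactly when $\kappa(\cdot\mid u)=\xi$ $\mu$-a.e., we get $\KL(\rho\Vert\gamma_\lambda^B)\ge\KL(\mu\Vert\sigma)$, with equality for $\kappa=\xi$.

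It remains to check that this choice is optimal for the whole objective $J$ in \eqref{eq:with kl1}, not only for its KL term. The terms $\Phi(\mu)$ and $-\alpha\E_\pi[u\cdot v]$ depend on $f$ only through $\hat f$. So given any $f$ with angular part $\hat f$, I will replace it by $\tilde f(x)=R(x)\hat f(x)$, where $R$ is a radius drawn independently of everything else with law $\xi$. This leaves $\mu$ and $\pi$ unchanged and gives $\tilde\rho=\mu\otimes\xi$.

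The main obstacle is that the encoder is deterministic, while $R$ is extra randomness. I would handle this using the assumption that $p_X$ is nonatomic on a standard Borel space. For a fixed measurable $\hat f$, take a regular conditional distribution of $X$ given $\hat f(X)$; a.e.\ conditional law is nonatomic, so a measurable function $g(x)$ can be chosen that is uniform on $[0,1]$ and independent of $\hat f(X)$ (if needed, realize the needed independent uniform inside the conditionally nonatomic fibers). Then set $R=F_\xi^{-1}(g(X))$. If such a construction fails in degenerate cases, I would fall back to stating optimality at the level of the measures $(\mu,\kappa)$, which is how the proposition is phrased.
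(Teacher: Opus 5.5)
Your core argument is exactly the paper's proof: disintegrate in polar coordinates, note that $\xi$ is the $u$-independent radial law $\propto r^{d-1}e^{-\lambda r^2}$ on the radial range of $B$, and apply the KL chain rule $\KL(\rho\Vert\gamma_\lambda^B)=\KL(\mu\Vert\sigma)+\int\KL(\kappa(\cdot\mid u)\Vert\xi)\,\mu(du)$, whose conditional term vanishes iff $\kappa(\cdot\mid u)=\xi$ $\mu$-a.e. Your extra realizability step is not needed for the statement, and as written it fails: nonatomicity of $p_X$ does not make the conditional law of $X$ given $\hat f(X)$ nonatomic (for injective $\hat f$ it is a point mass, so every $g(X)$ is a function of $\hat f(X)$ and cannot be a nonconstant variable independent of it), so your fallback to the measure-level claim is the right one and matches what the paper does.
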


The proof is given in Appendix~\ref{app:reg_prop_proof}. The above proposition reduces the optimization problem for unnormalized embedding to normalized embeddings only. It also describes an optimal probability for embedding norms, in contrast to the original InfoNCE loss, which is completely oblivious to embedding norms. 

It is important to note that because we are working with a standard Borel space with a nonatomic $p_X$, any probability $\rho\in \mathcal{P}(B)$ has $\rho = g_*p_X$ for some encoding $g$. In addition, any $\mu\in\mathcal{P}(\Sph^{d-1})$ has $\mu = h_*p_X$ for some encoding, and since $B$ contains a ball around $0$, there is an encoding $f$ s.t. $h=\hat{f}$. Thus we can legitimately speak about ``choosing'' $\rho$ or $\mu$, since suitable encodings exist that induce them. In addition, we may also define:

\begin{definition}
For every $\mu\in\mathcal{P}(\Sph^{d-1})$,
\begin{equation}
\mathrm{Align}(\mu)=\sup_f\big\{\ \E[\hat f(X)\cdot\hat f(Y)]:f\text{ measurable},\ (\hat f)_*p_X=\mu\ \big\}\;,
\end{equation}
\end{definition}
As was noted, the supremum is always taken on a nonempty set. 
We can write
\begin{equation}\label{eq:with kl2}
\tilde{J}(\mu)
=\Phi(\mu)-\alpha\mathrm{Align}(\mu)
+\beta \KL(\mu\|\sigma)\;,
\end{equation}
and it holds that $\inf_{\{\hat{f}:\hat{f}_*p_X = \mu\}} J(f) = \tilde{J}(\mu)$, and consequently $\inf_f J(f) = \inf_{\mu\in \mathcal{P}(\Sph^{d-1})}\tilde{J}(\mu)$.

The reason is that $\mathrm{Align}(\mu)$ can be approximated arbitrarily well by an encoding, and the $\KL$ divergence is optimized by taking the radial distribution given in Proposition~\ref{prop:radial optimal}. We can therefore focus on optimizing $\tilde{J}(\mu)$. 

The assumption for which we will prove our result is the following:

\begin{assumption}
\label{assump:alignment}
It holds that  $\alpha(\eta_2 - \mathrm{Align}(\sigma))\xrightarrow{d\to\infty} 0$.

\end{assumption}
We will require one more technical lemma before proceeding to prove the result.
\begin{lemma}\label{lem:lb_kl}
If $d\geq 2$, then $\KL(\mu\Vert\sigma)\geq C(d-1)\|m(\mu)\|^2$, where $C>0$ is a universal constant.
\end{lemma}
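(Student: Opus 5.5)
We plan to use the Donsker--Varadhan variational formula for the KL divergence, with a linear test function. For any measurable $g$ on $\Sph^{d-1}$ with $\E_\sigma e^{g}<\infty$,
\[
\KL(\mu\Vert\sigma)\;\ge\;\E_\mu[g]-\log\E_\sigma[e^{g}].
\]
The choice is $g(u)=t\,\langle w,u\rangle$, where $w=m(\mu)/\|m(\mu)\|$ and $t\ge 0$ is free. If $m(\mu)=0$ there is nothing to prove, and if $\mu\not\ll\sigma$ the left side is $+\infty$. With this choice, $\E_\mu[g]=t\|m(\mu)\|$. Writing $s:=\langle w,u\rangle$ for $u\sim\sigma$, we get
\[
\KL(\mu\Vert\sigma)\;\ge\;t\|m(\mu)\|-\log\E_\sigma\bigl[e^{ts}\bigr].
\]

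The key step is a sub-Gaussian bound on the moment generating function of a one-dimensional projection of the uniform law on the sphere:
\[
\E_\sigma\bigl[e^{ts}\bigr]\le \exp\!\Bigl(\tfrac{t^2}{2c(d-1)}\Bigr)
\]
for a universal $c>0$. We would prove this directly rather than cite it. The law of $s$ has density proportional to $(1-s^2)^{(d-3)/2}$ on $[-1,1]$ and is symmetric, so
\[
\E e^{ts}=\sum_{k\ge 0}\frac{t^{2k}\,\E s^{2k}}{(2k)!}.
\]
The moments follow from the Beta law of $s^2$:
\[
\E s^{2k}=\frac{(2k-1)!!}{d(d+2)\cdots(d+2k-2)}\le\frac{(2k-1)!!}{d^k}.
\]
Using $(2k-1)!!/(2k)!=1/(2^k k!)$, this gives $\E e^{ts}\le e^{t^2/(2d)}$, i.e.\ the claim with variance proxy $1/d\le 1/(d-1)$. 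For $d=2$ the density formula still holds as the arcsine law and the moment bound is unchanged. The main obstacle is getting this MGF bound uniformly in $d$ with an explicit constant; the moment-comparison route is the cleanest option. A fallback is to cite sub-Gaussianity of spherical marginals from \citet{vershynin2018high}, which gives some universal constant.

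Combining, we obtain for all $t\ge 0$
\[
\KL(\mu\Vert\sigma)\ge t\|m\|-\frac{t^2}{2d}.
\]
Optimizing at $t=d\|m\|$ gives $\KL(\mu\Vert\sigma)\ge \tfrac d2\|m\|^2\ge \tfrac12(d-1)\|m(\mu)\|^2$, so $C=1/2$ works. This in particular covers the relevant regime: $\|m\|\le 1$ automatically, so no case split on the size of $m$ is required.
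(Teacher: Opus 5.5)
Your proof is correct, and its skeleton matches the paper's. Both apply the Donsker--Varadhan formula with the linear test function $\varphi(u)=t\,w\cdot u$, $w=m(\mu)/\|m(\mu)\|$. Both reduce the problem to a sub-Gaussian bound on $\log\E_{u\sim\sigma}e^{t\,w\cdot u}$ and then optimize over $t$.

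The only difference is how that moment generating function bound is obtained. The paper uses exactly your fallback. It invokes L\'evy's isoperimetric inequality for the $1$-Lipschitz function $w\cdot u$ to get the tail bound $\sigma(|w\cdot u|\ge s)\le 2e^{-(d-1)s^2/2}$. It then converts tails into an MGF bound via \citet[Proposition~2.6.1]{vershynin2018high}, which leaves the constant $C=1/(8C'^2)$ unspecified.

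You instead compute the even moments exactly from the Beta law of $s^2$ and compare them term by term with the moments of a centered Gaussian of variance $1/d$, giving $\E e^{ts}\le e^{t^2/(2d)}$. This route is self-contained and elementary. Its variance proxy is optimal, since it equals $\Var(s)=1/d$. As a result you get the explicit constant $C=1/2$, and even the slightly stronger $\KL(\mu\Vert\sigma)\ge\frac d2\|m(\mu)\|^2$, which is sharp to leading order for weakly concentrated von Mises--Fisher laws.

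The paper's route is more general: its concentration step applies to any $1$-Lipschitz test function on the sphere, not only linear ones. The price is an inexplicit constant. Since only linear test functions are needed here, your computation loses nothing and gives a cleaner statement.
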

Proof is provided in Appendix~\ref{app:reg_lem_proof}. To understand the constant, see \citep[Proposition~2.6.1]{vershynin2018high}.

\begin{theorem}\label{thm:regularized main}
    Let $d\geq 2$. There is a universal constant $C>0$ s.t. for $\beta \geq \beta_0 = \frac{\alpha(1-\eta_2)}{C(d-1)}$,
    \begin{itemize}
        \item 
        Under Assumption~\ref{assump:alignment}, $\tilde{J}(\sigma)- \inf_\mu\tilde{J}(\mu)\xrightarrow{d\to\infty} 0$.
        \item 
        Assuming further that $\mathrm{Align}(\sigma)=\eta_2$ yields that $\tilde{J}(\sigma) = \min_{\mu}\tilde{J}(\mu)$.        
    \end{itemize}

    Moreover, as $d\to\infty$, $\beta_0\to 0$.
\end{theorem}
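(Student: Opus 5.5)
We compare $\tilde J(\mu)$ against $\tilde J(\sigma)$ term by term for an arbitrary $\mu\in\mathcal P(\Sph^{d-1})$, using three earlier results. First, $\Phi(\mu)\ge\Phi(\sigma)$, since the uniform law globally minimizes the uniformity potential (the first lemma behind Corollary~\ref{cor:gauss-k-simple}, following \citet{wang2020understanding}). Second, $\KL(\sigma\Vert\sigma)=0$. Third, the alignment term is controlled by Proposition~\ref{prop:alignment-bound}: every encoder $f$ with $\hat f_*p_X=\mu$ satisfies $\E[\hat f(X)\cdot\hat f(Y)]\le\eta_2+(1-\eta_2)\|m(\mu)\|^2$. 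Taking the supremum gives
\begin{equation*}
\mathrm{Align}(\mu)\le\eta_2+(1-\eta_2)\|m(\mu)\|^2 .
\end{equation*}

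Combining these,
\begin{align*}
\tilde J(\mu)-\tilde J(\sigma)
&=\big[\Phi(\mu)-\Phi(\sigma)\big]-\alpha\big[\mathrm{Align}(\mu)-\mathrm{Align}(\sigma)\big]+\beta\KL(\mu\Vert\sigma)\\
&\ge -\alpha\big(\eta_2-\mathrm{Align}(\sigma)\big)-\alpha(1-\eta_2)\|m(\mu)\|^2+\beta\KL(\mu\Vert\sigma).
\end{align*}
Lemma~\ref{lem:lb_kl} gives $\KL(\mu\Vert\sigma)\ge C(d-1)\|m(\mu)\|^2$, with the same universal $C$ used to define $\beta_0$. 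Hence for $\beta\ge\beta_0=\alpha(1-\eta_2)/(C(d-1))$,
\begin{equation*}
\beta\KL(\mu\Vert\sigma)\ge\beta_0 C(d-1)\|m(\mu)\|^2=\alpha(1-\eta_2)\|m(\mu)\|^2 ,
\end{equation*}
so the middle two terms together are nonnegative. This yields the uniform lower bound
\begin{equation*}
\tilde J(\mu)-\tilde J(\sigma)\ge-\alpha\big(\eta_2-\mathrm{Align}(\sigma)\big)\qquad\text{for all }\mu .
\end{equation*}
If $\KL(\mu\Vert\sigma)=\infty$, then $\tilde J(\mu)=\infty$ and the inequality is trivial.

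Taking the infimum over $\mu$ gives
\begin{equation*}
0\le\tilde J(\sigma)-\inf_\mu\tilde J(\mu)\le\alpha\big(\eta_2-\mathrm{Align}(\sigma)\big),
\end{equation*}
which tends to $0$ as $d\to\infty$ by Assumption~\ref{assump:alignment}. This is the first bullet. If $\mathrm{Align}(\sigma)=\eta_2$, the right-hand side is $0$, so $\sigma$ attains the infimum and $\tilde J(\sigma)=\min_\mu\tilde J(\mu)$, which is the second bullet. Finally, $\beta_0\to0$ because $\alpha$ is fixed and $1-\eta_2\in[0,1]$, while the denominator $C(d-1)$ grows linearly in $d$.

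The argument is essentially bookkeeping once the pieces are in place. The only step needing care is the alignment bound. Proposition~\ref{prop:alignment-bound} is stated for a specific encoder, so I must note that its right-hand side depends only on $\mu$; the bound then transfers to the supremum defining $\mathrm{Align}(\mu)$. I must also confirm that $\Phi(\sigma)$ is finite, so the differences above are well defined. The optimality of $\sigma$ for $\Phi$ on all of $\mathcal P(\Sph^{d-1})$, not merely among nice measures, is taken as given from the cited lemma.
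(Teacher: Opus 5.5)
Your proposal is correct and follows the paper's proof essentially step for step. It combines the same three ingredients: $\Phi(\mu)\ge\Phi(\sigma)$ from \citet{wang2020understanding}, Proposition~\ref{prop:alignment-bound} transferred to $\mathrm{Align}(\mu)$, and Lemma~\ref{lem:lb_kl}. Together these give the uniform bound $\tilde J(\mu)-\tilde J(\sigma)\ge-\alpha(\eta_2-\mathrm{Align}(\sigma))$, from which both bullets follow; your extra remarks (the $\KL=\infty$ case and the explicit bound $\beta_0\le\alpha/(C(d-1))\to 0$) just make explicit small points the paper leaves implicit.
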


\begin{proof}   
Write $\delta(d) = \eta_2 - \mathrm{Align}(\sigma)$. For every $\mu$, we have that $\Phi(\mu)-\Phi(\sigma)\geq 0$ \citep[Theorem~1]{wang2020understanding}. In addition, 
\begin{equation}
    \mathrm{Align}(\mu)-\mathrm{Align}(\sigma)\leq \eta_2 + (1-\eta_2)\|m(\mu)\|^2 - (\eta_2-\delta(d))=(1-\eta_2)\|m(\mu)\|^2 +\delta(d)
\end{equation}
by Proposition~\ref{prop:alignment-bound}. Lastly,
\begin{equation}
    \KL(\mu\|\sigma)-\KL(\sigma\|\sigma)=\KL(\mu\|\sigma)\geq C(d-1)\|m(\mu)\|^2
\end{equation}
by Lemma~\ref{lem:lb_kl}. Therefore,
\begin{align}\nonumber
\tilde{J}(\mu)-\tilde{J}(\sigma)
&= \left(\Phi(\mu)-\Phi(\sigma)\right) -\alpha(\mathrm{Align}(\mu)-\mathrm{Align}(\sigma))+\beta(\KL(\mu\|\sigma)-\KL(\sigma\|\sigma))\\\nonumber
&\geq -\alpha(1-\eta_2)\|m(\mu)\|^2-\alpha\delta(d)+\beta C(d-1)\|m(\mu)\|^2\\
&= \left(-\alpha(1-\eta_2)+\beta C(d-1)\right)\|m(\mu)\|^2-\alpha\delta(d)\geq -\alpha\delta(d)\;,
\end{align}
where the last inequality is by the choice of $\beta$.

If we assume that $\alpha\delta(d)\xrightarrow{d\to\infty} 0$, then $\tilde{J}(\sigma)-\inf_\mu\tilde{J}(\mu)\leq \alpha\delta(d)$, so $\tilde{J}(\sigma)- \inf_\mu\tilde{J}(\mu)\xrightarrow{d\to\infty} 0$.

If we assume further that $\mathrm{Align}(\sigma)=\eta_2$, then $\delta(d)=0$, and since $\tilde{J}(\sigma)\leq \tilde{J}(\mu)$ for every $\mu$, $\tilde{J}(\sigma)=\min_{\mu} \tilde{J}(\mu)$, completing the proof. 
\end{proof}
Since the optimal radial component of the distribution is known, we can draw conclusions w.r.t. $\rho$ as well. For example, we can directly obtain the following corollary. 
\begin{corollary}\label{cor:unnormalized is gaussian}
Let $B=\R^d$ ($d\geq 2$) and $\beta\geq\beta_0$. If $\mathrm{Align}(\sigma)=\eta_2$, where $\sigma$ is the uniform distribution on $\Sph^{d-1}$ and $\eta_2$ is the augmentation mildness, then $\mathcal{N}(0,(2\lambda)^{-1}I_d)$ is an optimal choice for $\rho$.
\end{corollary}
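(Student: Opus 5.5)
We prove the corollary by combining the second item of Theorem~\ref{thm:regularized main} with the radial reduction of Proposition~\ref{prop:radial optimal}. We first specialize to $B=\R^d$, where $\gamma_\lambda^{B}=\gamma_\lambda^{\R^d}$ is the density $c\,e^{-\lambda\|z\|^2}$, i.e.\ exactly $\mathcal N(0,(2\lambda)^{-1}I_d)$. Its polar decomposition is $\sigma(du)\,\xi(dr)$. Here the angular part is uniform because the density is rotation invariant. The radial kernel $\xi$ does not depend on $u$: it is the law of the norm of this Gaussian, with density proportional to $r^{d-1}e^{-\lambda r^2}$.

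\textbf{Step 1: the infimum of $J$ equals $\tilde J(\sigma)$.} Since $\beta\ge\beta_0$ and $\mathrm{Align}(\sigma)=\eta_2$, Theorem~\ref{thm:regularized main} gives $\tilde J(\sigma)=\min_\mu\tilde J(\mu)$. Using the stated identity $\inf_f J(f)=\inf_\mu \tilde J(\mu)$, we get $\inf_f J(f)=\tilde J(\sigma)$.

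\textbf{Step 2: the Gaussian attains the infimum.} Take $\rho=\gamma_\lambda^{\R^d}$. Its angular marginal is $\mu=\sigma$, and its radial kernel is $\kappa=\xi$, which is the optimal choice in Proposition~\ref{prop:radial optimal}. Hence $\KL(\rho\|\gamma_\lambda^{B})=\KL(\sigma\|\sigma)=0$. Since $p_X$ is nonatomic on a standard Borel space, some encoder $f$ satisfies $f_*p_X=\rho$. The term $\Phi$ depends only on $\hat f_*p_X=\sigma$, so it is fixed.

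\textbf{Step 3: realizing the alignment (main obstacle).} The remaining requirement is that the alignment term equals $\mathrm{Align}(\sigma)=\eta_2$ for an encoder with pushforward exactly $\rho$, not merely for an encoder with angular law $\sigma$. This requires two things.

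First, the supremum defining $\mathrm{Align}(\sigma)$ must be attained, or approximated, by some $h$ with $\hat h_*p_X=\sigma$. We then build $f(x)=R(x)\,\hat h(x)$, where the radius $R$ has law $\xi$ and is independent of the direction. Rescaling does not change $\hat f=\hat h$, so the alignment and $\Phi$ are unchanged.

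Second, we need such an $R$ to exist measurably. We obtain it by using the nonatomicity of $p_X$ to extract an auxiliary uniform variable independent of $\hat h(X)$. For instance, take the conditional distribution of $p_X$ given $\hat h(X)$, which is nonatomic after dithering if necessary. We apply a conditional quantile transform to it and then push through the quantile function of $\xi$.

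If the supremum is only approximated, then $J(f_n)\to\tilde J(\sigma)=\inf J$. In that case we read ``optimal choice for $\rho$'' at the level of $\tilde J$: the infimum over encoders with $f_*p_X=\rho$ equals the global infimum. This is precisely the sense of the identity $\inf_{\hat f_*p_X=\mu}J=\tilde J(\mu)$ already used in the paper. That identity applies equally when the radial law is fixed to $\xi$, since the KL term is then already minimized.

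Hence $\rho=\mathcal N(0,(2\lambda)^{-1}I_d)$ achieves the optimal value of \eqref{eq:with kl1}.
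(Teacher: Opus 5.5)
Your proposal is correct and follows the paper's essentially one-line argument. The second item of Theorem~\ref{thm:regularized main} makes $\sigma$ optimal for $\tilde J$, Proposition~\ref{prop:radial optimal} makes the radial kernel $\xi$ of $\gamma_\lambda^{B}$ optimal, and for $B=\R^d$ the product $\sigma\otimes\xi$ is exactly $\mathcal N(0,(2\lambda)^{-1}I_d)$. Your Step~3 handles a realizability point the paper glosses over, but the conditional-quantile construction fails when $\hat h$ is injective: its fibers are point masses, and a deterministic encoder cannot be ``dithered.'' The clean fix is to replace $\hat h$ by a quantized $\hat h'$ that pushes each preimage $\hat h^{-1}(C_i)$ of a small cell onto $\sigma|_{C_i}/\sigma(C_i)\otimes\xi$, which changes the alignment by at most $2\|\hat h-\hat h'\|_{L^2(p_X)}$.
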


\ignore{%
}
\ignore{

}

\section{Experiments}
\label{sec:exp}

\begin{figure*}[t]
    \centering
    \includegraphics[width=0.9\linewidth]{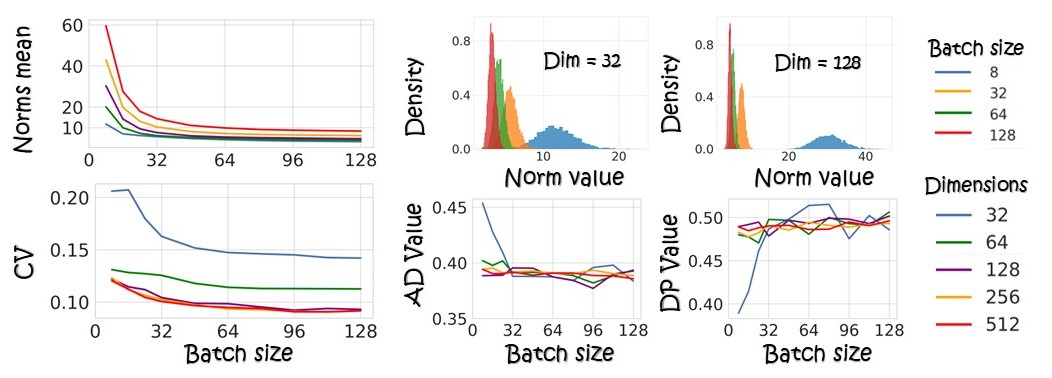}
    \caption{\textbf{Synthetic data experiments.}
Left: representation norm statistics vs.\ batch size (curves denote dimension), showing thin-shell concentration with increasing $d$ and $N$.
Top middle/right: norm histograms illustrating radius tightening.
Bottom: normality diagnostics (AD, DP), with averages in the Gaussian acceptance range.}

    \label{fig:laplace_gauss}
\end{figure*}

We empirically evaluate the distributional geometry of representations learned with the InfoNCE objective.
The experiments are designed to test three theoretical predictions:  
(i) concentration of representation norms on a thin shell,  
(ii) emergence of Gaussian low-dimensional projections, and  
(iii) the dependence of these phenomena on contrastive learning.

We consider three settings of increasing complexity: synthetic data with linear encoders, CIFAR-10 with both contrastive and supervised training, and pretrained foundation-scale models.
In all cases, we analyze both \emph{normalized} and \emph{unnormalized} representations.
All reported trends are stable across runs; figures show representative seeds, with full implementation details in Appendix~\ref{app:impl}.

\noindent \textbf{Metrics.}
We quantify Gaussian structure using complementary diagnostics targeting radial and coordinate-wise behavior.
To assess norm concentration, we measure the coefficient of variation (CV) of representation norms:

\begin{equation}
\label{eq:cv-radius}
\mathrm{CV}
=
\frac{\operatorname{std}\!\bigl(\{\lVert z_i\rVert\}_{i=1}^N\bigr)}
     {\operatorname{mean}\!\bigl(\{\lVert z_i\rVert\}_{i=1}^N\bigr)} .
\end{equation}

$z_i$ are the learned representations and $N$ is the number of samples. A small CV indicates concentration of $\|z_i\|$ around a characteristic radius, consistent with thin-shell behavior.

To evaluate Gaussianity of low-dimensional projections, we apply two standard one-dimensional normality tests to individual coordinates:
(i) the Anderson-Darling (AD) test~\citep{anderson1954test}, where AD $< 0.752$ corresponds to failure to reject normality, and
(ii) the D’Agostino-Pearson (DP) test~\citep{d1973tests}, where $p>0.05$ indicates failure to reject the Gaussian hypothesis.
These tests probe marginal normality of fixed coordinates, as predicted by the spherical central limit theorem.

Taken together, CV captures global radial structure, while AD and DP test coordinate-level Gaussianity.
This combination provides a strong finite-sample indicator of approximate Gaussian behavior and provides evidence against common heavy-tailed or mixture alternatives, which typically fail at least one of these diagnostics.

\textbf{Synthetic data experiments}.
We begin with controlled synthetic settings to validate our diagnostics and isolate the mechanisms predicted by the theory.
We consider three synthetic data distributions: (i) an i.i.d. Laplace$(0,1)$ distribution, (ii) a Gaussian mixture with 25 equally weighted components and random means, and (iii) a fully discrete sparse binary distribution (1024-dimensional vectors). Each dataset contains 10k samples, and we train linear encoders using InfoNCE while varying the representation dimension and batch size.
In addition, we explicitly track alignment and uniformity as functions of batch size and dimension (Fig.~\ref{fig:uni_align_violin}) to probe the saturation behavior predicted by our Assumption \ref{assump:plateau}.

Figure~\ref{fig:laplace_gauss} shows that, for Laplace inputs, representation norms progressively concentrate as both batch size and dimension increase, evidenced by a monotonic decrease in the coefficient of variation (CV). Norm histograms further illustrate the emergence of thin-shell concentration.
Normality diagnostics (AD and DP) indicate that individual coordinates fall well within Gaussian acceptance thresholds, with perfect per-coordinate compliance (Table~\ref{table:gaussian_core}).

Across all three synthetic settings, including strongly non-Gaussian mixture inputs, the learned representations exhibit low norm variation and strong coordinate-wise Gaussianity (Table~\ref{table:gaussian_core}), indicating that marginal Gaussian structure emerges independently of the input distribution. The same phenomenon is observed for the fully discrete binary dataset: although representations are initially far from Gaussian, training drives pronounced norm concentration and coordinate-wise normality. Since this distribution admits no invertible mapping to a continuous Gaussian, the observed structure cannot be explained by latent Gaussian recovery.

In parallel, alignment quickly approaches a stable ceiling determined by the augmentation channel (Fig.~\ref{fig:uni_align_violin}), while uniformity continues to improve. This behavior is consistent with the saturation route and the emergence of isotropic Gaussian structure in high dimension. Together, these controlled experiments support the assumptions underlying our theoretical analysis and motivate the study of Gaussianity in more realistic settings.

\begin{table}[b]
\centering
\caption{\textbf{Gaussianity diagnostics across data and training settings.}
We report norm concentration (CV) and Gaussianity via AD and DP tests (average statistic and fraction of compliant coordinates).
Lower AD and higher DP indicate closer Gaussian agreement.
Binary E0/E50/E100 denote evaluation at epochs 0/50/100; other results are from the end of training.
Results use unnormalized embeddings.}
\label{table:gaussian_core}

\setlength{\tabcolsep}{4pt}
\renewcommand{\arraystretch}{1.1}

\begin{adjustbox}{width=\linewidth}
\begin{tabular}{|l||cc|ccc||cc|}
\hline
\multirow{2}{*}{\textbf{Metric}}
& \multicolumn{5}{c||}{\textbf{Synthetic (Linear)}} 
& \multicolumn{2}{c|}{\textbf{CIFAR-10 (ResNet-18)}} \\
\cline{2-8}

& \textbf{Laplace} & \textbf{GMM}
& \textbf{Binary E0} & \textbf{E50} & \textbf{E100}
& \textbf{Supervised} & \textbf{Contrastive} \\
\hline\hline

CV 
& 0.08 & 0.08 
& 0.36 & 0.12 & 0.09
& 0.5 & 0.09 \\

AD Avg. ($<0.752$) 
& 0.38 & 0.39 
& 1.64 & 0.41 & 0.42
& 3.3 & 0.43 \\

AD Norm. Feat. 
& 100\% & 100\% 
& 30\% & 93\% & 97\%
& 6.2\% & 96.1\% \\

DP Avg. ($>0.05$) 
& 0.49 & 0.46 
& 0.02 & 0.44 & 0.46
& 0.041 & 0.39 \\

DP Norm. Feat. 
& 100\% & 100\% 
& 15\% & 89\% & 98\%
& 3.9\% & 94.5\% \\

\hline

\textbf{Gaussian?} 
& \textcolor{green!60!black}{$\checkmark$} 
& \textcolor{green!60!black}{$\checkmark$}
& \textcolor{red}{$\times$} 
& \textcolor{green!60!black}{$\checkmark$} 
& \textcolor{green!60!black}{$\checkmark$}
& \textcolor{red}{$\times$} 
& \textcolor{green!60!black}{$\checkmark$} \\

\hline
\end{tabular}
\end{adjustbox}
\end{table}


\begin{figure*}[t]
    \centering
    \includegraphics[width=0.85\linewidth]{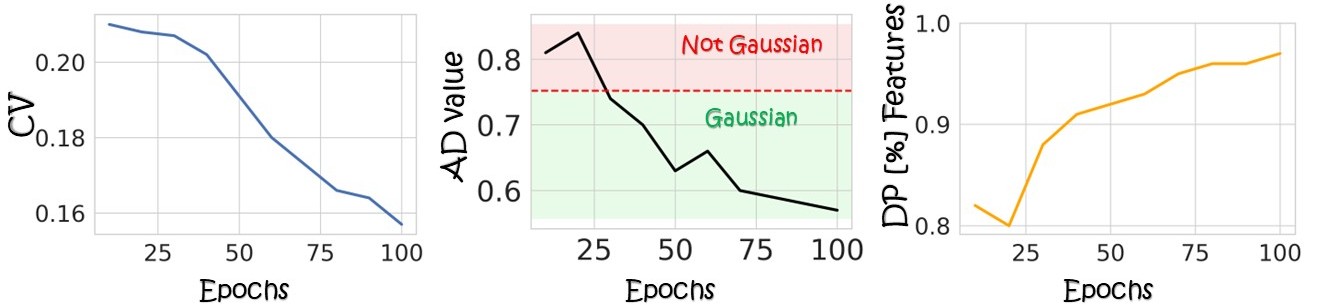}
    \caption{\textbf{CIFAR-10 training dynamics.}  
    A two-layer MLP trained with InfoNCE on CIFAR-10 exhibits increasing Gaussianity over training.  
    Left: representation norms concentrate as indicated by declining CV \peqref{eq:cv-radius}.  
    Middle: the AD statistic decreases from non-Gaussian levels into the normal range.  
    Right: the fraction of coordinates passing the DP normality test rises steadily.}
    \label{fig:cifar_curves}
\end{figure*}

\noindent \textbf{CIFAR-10 experiments.}
We next study whether Gaussian structure emerges in a realistic vision setting.
We train a two-layer MLP with a single ReLU nonlinear activation using the InfoNCE objective on CIFAR-10, and evaluate representations on the test set throughout training.

Figure~\ref{fig:cifar_curves} shows consistent trends across training: 
representation norms concentrate over time, as indicated by a steadily decreasing CV; 
the AD statistic drops from non-Gaussian levels into the normal regime; 
and the fraction of coordinates passing the DP test increases monotonically.
These dynamics illustrate the joint emergence of thin-shell concentration and coordinate-wise Gaussianity as optimization progresses.
These trends mirror the synthetic setting and show that norm concentration and Gaussianity also emerge in realistic contrastive training.

\noindent \textbf{Contrastive vs.\ supervised training.}
To isolate the role of the training objective, we compare supervised and contrastive learning using the same ResNet-18 architecture on CIFAR-10, following the SimCLR training protocol.
Both models share identical initialization and capacity, differing only in the objective: cross-entropy supervision versus InfoNCE.
As shown in Table~\ref{table:gaussian_core}, supervised training yields representations with high norm variability and strong deviations from Gaussianity, with most coordinates failing both AD and DP tests.
In contrast, InfoNCE training produces concentrated norms and near-Gaussian per-coordinate distributions.
These results indicate that the emergence of Gaussian structure is not explained by the data or architecture alone, but is also a direct consequence of the contrastive objective.

\begin{table*}[htb]
\centering
\caption{\textbf{Gaussianity diagnostics for pretrained models.}
Coordinate-wise Gaussianity via AD and DP tests (average statistic and fraction of compliant coordinates).
Test thresholds are indicated in headers.
Results shown for \textit{Unnormalized} / \textit{Normalized} embeddings.}
\label{table:gaussian_pretrained}

\setlength{\tabcolsep}{6pt}
\renewcommand{\arraystretch}{1.1}

\begin{adjustbox}{width=\linewidth}
\begin{tabular}{|l|l||cc||cc|c|}
\hline
\multirow{2}{*}{\textbf{Training}} 
& \multirow{2}{*}{\textbf{Model (Test data)}} 
& \multicolumn{2}{c||}{\textbf{AD ($<0.752$)}} 
& \multicolumn{2}{c|}{\textbf{DP ($>0.05$)}} 
& \multirow{2}{*}{\textbf{Gaussian?}} \\
\cline{3-6}

& 
& \textbf{Avg.} & \textbf{Norm. Feat. (\%)} 
& \textbf{Avg.} & \textbf{Norm. Feat. (\%)} & \\
\hline\hline

\multirow{2}{*}{Supervised}
& ResNet-34 (MS-COCO)         & 10.01 / 9.638 & 0.0\% / 0.0\% & $2.2\!\times\!10^{-6}$ / $3.2\!\times\!10^{-6}$ & 0.0\% / 0.0\% & \textcolor{red}{$\times$} \\
& DenseNet (MS-COCO)          & 2.982 / 2.8538 & 42.2\% / 41.6\% & 0.1550 / 0.1442 & 49.3\% / 49.0\% & \textcolor{red}{$\times$} \\
\hline

\multirow{5}{*}{Self-supervised}
& DINO (MS-COCO)              & 0.44 / 0.44 & 97.0\% / 97.1\% & 0.45 / 0.45 & 99.2\% / 99.3\% & \textcolor{green!60!black}{$\checkmark$}\\
& CLIP (Image, MS-COCO)        & 0.47 / 0.49 & 96.8\% / 96.0\% & 0.42 / 0.39 & 99.6\% / 99.4\% & \textcolor{green!60!black}{$\checkmark$}\\
& CLIP (Text, MS-COCO)         & 0.53 / 0.54 & 94.0\% / 93.6\% & 0.38 / 0.38 & 99.4\% / 99.7\%   & \textcolor{green!60!black}{$\checkmark$}\\
& CLIP (Image, Sketch)    &  0.4 / 0.4  &  94.8\% / 94.7\% &  0.44 / 0.42 &  93.3\% / 93.2\% & \textcolor{green!60!black}{$\checkmark$}\\
& CLIP (Image, Painting)  &  0.41 / 0.42 &  95.3\% / 95.1\% &  0.43 / 0.4 &  94.2\% / 93.9\% & \textcolor{green!60!black}{$\checkmark$}\\

\hline

\end{tabular}
\end{adjustbox}
\end{table*}

\noindent \textbf{Pretrained models.}
We further examine whether Gaussian structure persists in large pretrained representations.
On the MS-COCO validation set~\citep{lin2014microsoft}, we compare self-supervised backbones CLIP (ViT-L/14 image and text encoders)~\citep{radford2021learning} and DINO (ViT-B/32)~\citep{caron2021emerging} against supervised ImageNet-pretrained~\citep{deng2009imagenet}, ResNet34~\citep{he2016deep}, and DenseNet~\citep{huang2017densely}.
Normality diagnostics (Table~\ref{table:gaussian_pretrained}) show that self-supervised models exhibit near-Gaussian coordinate distributions, while supervised models deviate substantially.
We further evaluate the CLIP image encoder on ImageNet-R (Sketch and Painting domains) to test robustness beyond natural images, and again observe strong Gaussian signatures.
Although CLIP and DINO are not exact instances of the unimodal InfoNCE setting, these diagnostics suggest that similar isotropic Gaussian-like statistics may arise broadly in self-supervised objectives.

\section{Discussion and Conclusion}
We showed that InfoNCE trained representations admit an asymptotic Gaussian law, via two routes: an alignment-plateau analysis with thin-shell concentration, and a regularized surrogate with milder assumptions.  
Experiments on synthetic data, CIFAR-10, and pretrained models (MS-COCO and ImageNet-R) are consistent with these assumptions and the Gaussian hypothesis, revealing norm concentration, alignment saturation, and near-Gaussian projections. These results indicate that the Gaussian convergence remains informative well before the infinite-dimensional limit.
This Gaussian view justifies common modeling choices (e.g., likelihood scoring, OOD detection) and suggests that explicit isotropy promoting regularizers may act as principled surrogates for InfoNCE’s implicit bias.  
However, \emph{limitations} remain: our results are asymptotic, relying on high-dimensional limits and idealized assumptions that may not capture all practical regimes. 
We therefore view our asymptotic framework as a principled starting point rather than a complete description of all practical regimes.
For finite dimension $d$ and batch size $N$, projections are close to Gaussian, with deviations vanishing as $d,N\to\infty$. Quantitative bounds follow from classical Berry-Esseen \citep{vershynin2018high} rates in high dimension and uniform laws of large numbers for empirical objectives \citep{wellner2013weak}. In particular, the minimizer of the empirical InfoNCE loss deviates from the population minimizer by $O(N^{-1/2})$ according to \citet[Thm. 1]{wang2020understanding}, and the distribution of fixed-$k$ projections deviates from Gaussian by $O(d^{-1})$ according to \citet{diaconis1987dozen} (see Theorem~\ref{thm:conv_rate} in Appendix~\ref{app:plateau_proof_norm}). 
Thus, for large but finite $d,N$, the Gaussian limit provides a representative and empirically useful approximation.
In addition, we do not analyze optimization dynamics or prove that training attains these minimizers in practice; our results are asymptotic and characterize the population optima under the stated assumptions.
Overall, we provide a principled asymptotic explanation for Gaussianity in contrastive representations, grounding empirical observations and opening new directions for analysis and practical design.  

\section*{Ethics Statement}
This work is theoretical and empirical in nature, focused on understanding the statistical behavior of representations trained with contrastive learning.  
We do not foresee direct negative societal impacts.  
Potential downstream applications of Gaussian modeling (e.g., density estimation, OOD detection) could influence decisions in safety-critical domains, and care must be taken to ensure robustness and fairness.  

\section*{Reproducibility}
We provide detailed descriptions of theoretical assumptions, proofs, and experimental protocols.  
Datasets (synthetic data, CIFAR-10 \citep{krizhevsky2009learning}, and MS-COCO \citep{lin2014microsoft}) are publicly available.  
Architectures, hyperparameters, and training settings are fully specified (Appendix~\ref{app:impl}), and to ensure reproducibility, code for the experiments is released \href{https://github.com/rbetser/InfoNCE-induces-Gaussian-distribution}{here}.

\section*{Acknowledgments}
We would like to acknowledge  
support by the Israel Science Foundation (Grant 1472/23) and by the Ministry of Innovation, Science and Technology (Grants No. 5074/22, 8801/25).

\bibliography{references}

@article{oord2018representation,
  title={Representation learning with contrastive predictive coding},
  author={Oord, Aaron van den and Li, Yazhe and Vinyals, Oriol},
  journal={arXiv preprint arXiv:1807.03748},
  year={2018}
}

@article{butakov2024efficient,
  title={Efficient Distribution Matching of Representations via Noise-Injected Deep InfoMax},
  author={Butakov, Ivan and Semenenko, Alexander and Tolmachev, Alexander and Gladkov, Andrey and Munkhoeva, Marina and Frolov, Alexey},
  journal={arXiv preprint arXiv:2410.06993},
  year={2024}
}

@inproceedings{chen2020simple,
  title={A simple framework for contrastive learning of visual representations},
  author={Chen, Ting and Kornblith, Simon and Norouzi, Mohammad and Hinton, Geoffrey},
  booktitle={International conference on machine learning},
  pages={1597--1607},
  year={2020},
  organization={PMLR}
}

@article{anantharam2013maximal,
  title={On maximal correlation, hypercontractivity, and the data processing inequality studied by {E}rkip and {C}over},
  author={Anantharam, Venkat and Gohari, Amin and Kamath, Sudeep and Nair, Chandra},
  journal={arXiv preprint arXiv:1304.6133},
  year={2013}
}

@article{bryc2005maximum,
  title={On the maximum correlation coefficient},
  author={Bryc, Wlodzimierz and Dembo, Amir},
  journal={Theory of Probability \& Its Applications},
  volume={49},
  number={1},
  pages={132--138},
  year={2005},
  publisher={SIAM}
}

@article{anderson1954test,
  title={A test of goodness of fit},
  author={Anderson, Theodore W and Darling, Donald A},
  journal={Journal of the American statistical association},
  volume={49},
  number={268},
  pages={765--769},
  year={1954},
  publisher={Taylor \& Francis}
}

@article{liang2022mind,
  title={Mind the gap: Understanding the modality gap in multi-modal contrastive representation learning},
  author={Liang, Victor Weixin and Zhang, Yuhui and Kwon, Yongchan and Yeung, Serena and Zou, James Y},
  journal={Advances in Neural Information Processing Systems},
  volume={35},
  pages={17612--17625},
  year={2022}
}

@article{d1973tests,
  title={Tests for departure from normality. {E}mpirical results for the distributions of $b_2$ and $\sqrt{b_1}$},
  author={D'{A}gostino, Ralph and Pearson, Egon S},
  journal={Biometrika},
  volume={60},
  number={3},
  pages={613--622},
  year={1973},
  publisher={Oxford University Press}
}

@inproceedings{lin2014microsoft,
  title={Microsoft {COCO}: Common objects in context},
  author={Lin, Tsung-Yi and Maire, Michael and Belongie, Serge and Hays, James and Perona, Pietro and Ramanan, Deva and Doll{\'a}r, Piotr and Zitnick, C Lawrence},
  booktitle={European conference on computer vision},
  pages={740--755},
  year={2014},
  organization={Springer}
}

@misc{krizhevsky2009learning,
  title={Learning multiple layers of features from tiny images},
  author={Krizhevsky, Alex and Hinton, Geoffrey and others},
  year={2009},
  publisher={Toronto, ON, Canada}
}

@inproceedings{betser2025general,
  title={General and Domain-Specific Zero-shot Detection of Generated Images via Conditional Likelihood},
  author={Betser, Roy and Hofman, Omer and Vainshtein, Roman and Gilboa, Guy},
  booktitle={Proceedings of the IEEE/CVF Winter Conference on Applications of Computer Vision (WACV)},
  year={2026}
}

@article{haochen2021provable,
  title={Provable guarantees for self-supervised deep learning with spectral contrastive loss},
  author={HaoChen, Jeff Z and Wei, Colin and Gaidon, Adrien and Ma, Tengyu},
  journal={Advances in neural information processing systems},
  volume={34},
  pages={5000--5011},
  year={2021}
}

@book{szeg1939orthogonal,
  title={Orthogonal polynomials},
  author={Szeg, Gabor},
  volume={23},
  year={1939},
  publisher={American Mathematical Soc.}
}

@article{lee2018simple,
  title={A simple unified framework for detecting out-of-distribution samples and adversarial attacks},
  author={Lee, Kimin and Lee, Kibok and Lee, Honglak and Shin, Jinwoo},
  journal={Advances in neural information processing systems},
  volume={31},
  year={2018}
}

@inproceedings{assran2023self,
  title={Self-supervised learning from images with a joint-embedding predictive architecture},
  author={Assran, Mahmoud and Duval, Quentin and Misra, Ishan and Bojanowski, Piotr and Vincent, Pascal and Rabbat, Michael and LeCun, Yann and Ballas, Nicolas},
  booktitle={Proceedings of the IEEE/CVF Conference on Computer Vision and Pattern Recognition},
  pages={15619--15629},
  year={2023}
}

@article{bardes2024revisiting,
  title={Revisiting feature prediction for learning visual representations from video},
  author={Bardes, Adrien and Garrido, Quentin and Ponce, Jean and Chen, Xinlei and Rabbat, Michael and LeCun, Yann and Assran, Mahmoud and Ballas, Nicolas},
  journal={arXiv preprint arXiv:2404.08471},
  year={2024}
}

@article{balestriero2025lejepa,
  title={LeJEPA: Provable and Scalable Self-Supervised Learning Without the Heuristics},
  author={Balestriero, Randall and LeCun, Yann},
  journal={arXiv preprint arXiv:2511.08544},
  year={2025}
}

@article{balestriero2025gaussian,
  title={Gaussian Embeddings: How {JEPA}s Secretly Learn Your Data Density},
  author={Balestriero, Randall and Ballas, Nicolas and Rabbat, Mike and LeCun, Yann},
  journal={arXiv preprint arXiv:2510.05949},
  year={2025}
}

@article{reizinger2024cross,
  title={Cross-entropy is all you need to invert the data generating process},
  author={Reizinger, Patrik and Bizeul, Alice and Juhos, Attila and Vogt, Julia E and Balestriero, Randall and Brendel, Wieland and Klindt, David},
  journal={arXiv preprint arXiv:2410.21869},
  year={2024}
}

@book{van2000asymptotic,
  title={Asymptotic statistics},
  author={Van der Vaart, Aad W},
  volume={3},
  year={2000},
  publisher={Cambridge university press}
}

@book{wellner2013weak,
  title={Weak convergence and empirical processes: with applications to statistics},
  author={Wellner, Jon and others},
  year={2013},
  publisher={Springer Science \& Business Media}
}

@article{huang2020sample,
  title={On the sample complexity of {HGR} maximal correlation functions for large datasets},
  author={Huang, Shao-Lun and Xu, Xiangxiang},
  journal={IEEE Transactions on Information Theory},
  volume={67},
  number={3},
  pages={1951--1980},
  year={2020},
  publisher={IEEE}
}

@article{zhang2024hgr,
  title={{HGR} correlation pooling fusion framework for recognition and classification in multimodal remote sensing data},
  author={Zhang, Hongkang and Huang, Shao-Lun and Kuruoglu, Ercan Engin},
  journal={Remote Sensing},
  volume={16},
  number={10},
  pages={1708},
  year={2024},
  publisher={MDPI}
}

@article{tian2020makes,
  title={What makes for good views for contrastive learning?},
  author={Tian, Yonglong and Sun, Chen and Poole, Ben and Krishnan, Dilip and Schmid, Cordelia and Isola, Phillip},
  journal={Advances in neural information processing systems},
  volume={33},
  pages={6827--6839},
  year={2020}
}

@inproceedings{he2020momentum,
  title={Momentum contrast for unsupervised visual representation learning},
  author={He, Kaiming and Fan, Haoqi and Wu, Yuxin and Xie, Saining and Girshick, Ross},
  booktitle={Proceedings of the IEEE/CVF conference on computer vision and pattern recognition},
  pages={9729--9738},
  year={2020}
}

@inproceedings{radford2021learning,
  title={Learning transferable visual models from natural language supervision},
  author={Radford, Alec and Kim, Jong Wook and Hallacy, Chris and Ramesh, Aditya and Goh, Gabriel and Agarwal, Sandhini and Sastry, Girish and Askell, Amanda and Mishkin, Pamela and Clark, Jack and others},
  booktitle={International conference on machine learning},
  pages={8748--8763},
  year={2021},
  organization={PMLR}
}

@inproceedings{wang2020understanding,
  title={Understanding contrastive representation learning through alignment and uniformity on the hypersphere},
  author={Wang, Tongzhou and Isola, Phillip},
  booktitle={International conference on machine learning},
  pages={9929--9939},
  year={2020},
  organization={PMLR}
}

@inproceedings{chen2021exploring,
  title={Exploring simple siamese representation learning},
  author={Chen, Xinlei and He, Kaiming},
  booktitle={Proceedings of the IEEE/CVF conference on computer vision and pattern recognition},
  pages={15750--15758},
  year={2021}
}

@inproceedings{betser2025whitened,
  title={Whitened {CLIP} as a Likelihood Surrogate of Images and Captions},
  author={Betser, Roy and Levi, Meir Yossef and Gilboa, Guy},
  booktitle={42nd International conference on machine learning},
  year={2025}
}

@book{vershynin2018high,
  title={High-dimensional probability: An introduction with applications in data science},
  author={Vershynin, Roman},
  volume={47},
  year={2018},
  publisher={Cambridge university press}
}

@article{diaconis1984asymptotics,
  title={Asymptotics of graphical projection pursuit},
  author={Diaconis, Persi and Freedman, David},
  journal={The annals of statistics},
  pages={793--815},
  year={1984},
  publisher={JSTOR}
}

@inproceedings{roeder2021linear,
  title={On linear identifiability of learned representations},
  author={Roeder, Geoffrey and Metz, Luke and Kingma, Durk},
  booktitle={International Conference on Machine Learning},
  pages={9030--9039},
  year={2021},
  organization={PMLR}
}

@inproceedings{hyvarinen2019nonlinear,
  title={Nonlinear {ICA} using auxiliary variables and generalized contrastive learning},
  author={Hyvarinen, Aapo and Sasaki, Hiroaki and Turner, Richard},
  booktitle={The 22nd international conference on artificial intelligence and statistics},
  pages={859--868},
  year={2019},
  organization={PMLR}
}

@article{hyvarinen2016unsupervised,
  title={Unsupervised feature extraction by time-contrastive learning and nonlinear {ICA}},
  author={Hyvarinen, Aapo and Morioka, Hiroshi},
  journal={Advances in neural information processing systems},
  volume={29},
  year={2016}
}

@inproceedings{zimmermann2021contrastive,
  title={Contrastive learning inverts the data generating process},
  author={Zimmermann, Roland S and Sharma, Yash and Schneider, Steffen and Bethge, Matthias and Brendel, Wieland},
  booktitle={International conference on machine learning},
  pages={12979--12990},
  year={2021},
  organization={PMLR}
}

@article{baumann2024post,
  title={Post-hoc probabilistic vision-language models},
  author={Baumann, Anton and Li, Rui and Klasson, Marcus and Mentu, Santeri and Karthik, Shyamgopal and Akata, Zeynep and Solin, Arno and Trapp, Martin},
  journal={arXiv preprint arXiv:2412.06014},
  year={2024}
}

@inproceedings{farquhar2020radial,
  title={Radial {B}ayesian neural networks: Beyond discrete support in large-scale {B}ayesian deep learning},
  author={Farquhar, Sebastian and Osborne, Michael A and Gal, Yarin},
  booktitle={International Conference on Artificial Intelligence and Statistics},
  pages={1352--1362},
  year={2020},
  organization={PMLR}
}

@article{morales2024bayesadapter,
  title={{BayesAdapter}: enhanced uncertainty estimation in {CLIP} few-shot adaptation},
  author={Morales-{\'A}lvarez, Pablo and Christodoulidis, Stergios and Vakalopoulou, Maria and Piantanida, Pablo and Dolz, Jose},
  journal={arXiv preprint arXiv:2412.09718},
  year={2024}
}

@article{davidson2018hyperspherical,
  title={Hyperspherical variational auto-encoders},
  author={Davidson, Tim R and Falorsi, Luca and De Cao, Nicola and Kipf, Thomas and Tomczak, Jakub M},
  journal={arXiv preprint arXiv:1804.00891},
  year={2018}
}

@article{wegner2021lecture,
  title={Lecture notes on high-dimensional data},
  author={Wegner, Sven-Ake},
  journal={arXiv preprint arXiv:2101.05841},
  year={2021}
}

@inproceedings{saunshi2019theoretical,
  title={A theoretical analysis of contrastive unsupervised representation learning},
  author={Saunshi, Nikunj and Plevrakis, Orestis and Arora, Sanjeev and Khodak, Mikhail and Khandeparkar, Hrishikesh},
  booktitle={International conference on machine learning},
  pages={5628--5637},
  year={2019},
  organization={PMLR}
}

@inproceedings{caron2021emerging,
  title={Emerging properties in self-supervised vision transformers},
  author={Caron, Mathilde and Touvron, Hugo and Misra, Ishan and J{\'e}gou, Herv{\'e} and Mairal, Julien and Bojanowski, Piotr and Joulin, Armand},
  booktitle={Proceedings of the IEEE/CVF international conference on computer vision},
  pages={9650--9660},
  year={2021}
}

@inproceedings{ermolov2021whitening,
  title={Whitening for self-supervised representation learning},
  author={Ermolov, Aleksandr and Siarohin, Aliaksandr and Sangineto, Enver and Sebe, Nicu},
  booktitle={International conference on machine learning},
  pages={3015--3024},
  year={2021},
  organization={PMLR}
}

@article{maxwell1860ii,
  title={II. Illustrations of the dynamical theory of gases},
  author={Maxwell, James Clerk},
  journal={The London, Edinburgh, and Dublin Philosophical Magazine and Journal of Science},
  volume={20},
  number={130},
  pages={21--37},
  year={1860},
  publisher={Taylor \& Francis}
}

@book{poincare1912calcul,
  title={Calcul des probabilit{\'e}s},
  author={Poincar{\'e}, Henri},
  year={1912},
  publisher={Gauthier-Villars}
}

@inproceedings{levi2024double,
  title={The double-ellipsoid geometry of {CLIP}},
  author={Levi, Meir Yossef and Gilboa, Guy},
  booktitle={42nd International conference on machine learning},
  year={2025}
}

@article{bardes2022vicregl,
  title={{VICRegL}: Self-supervised learning of local visual features},
  author={Bardes, Adrien and Ponce, Jean and LeCun, Yann},
  journal={Advances in Neural Information Processing Systems},
  volume={35},
  pages={8799--8810},
  year={2022}
}

@article{eftekhari2025importance,
  title={On the Importance of Gaussianizing Representations},
  author={Eftekhari, Daniel and Papyan, Vardan},
  journal={arXiv preprint arXiv:2505.00685},
  year={2025}
}

@article{papyan2020prevalence,
  title={Prevalence of neural collapse during the terminal phase of deep learning training},
  author={Papyan, Vardan and Han, XY and Donoho, David L},
  journal={Proceedings of the National Academy of Sciences},
  volume={117},
  number={40},
  pages={24652--24663},
  year={2020},
  publisher={National Academy of Sciences}
}

@article{draganov2025importance,
  title={On the Importance of Embedding Norms in Self-Supervised Learning},
  author={Draganov, Andrew and Vadgama, Sharvaree and Damrich, Sebastian and B{\"o}hm, Jan Niklas and Maes, Lucas and Kobak, Dmitry and Bekkers, Erik},
  journal={arXiv preprint arXiv:2502.09252},
  year={2025}
}

@article{hirschfeld1935connection,
  title={A connection between correlation and contingency},
  author={Hirschfeld, Hermann O},
  journal={Mathematical proceedings of the cambridge philosophical society},
  volume={31},
  number={4},
  pages={520--524},
  year={1935},
  organization={Cambridge University Press}
}

@article{gebelein1941statistische,
  title={Das statistische Problem der Korrelation als Variations-und Eigenwertproblem und sein Zusammenhang mit der Ausgleichsrechnung},
  author={Gebelein, Hans},
  journal={ZAMM-Journal of Applied Mathematics and Mechanics/Zeitschrift f{\"u}r Angewandte Mathematik und Mechanik},
  volume={21},
  number={6},
  pages={364--379},
  year={1941},
  publisher={Wiley Online Library}
}

@article{renyi1959measures,
  title={On measures of dependence},
  author={R{\'e}nyi, Alfr{\'e}d},
  journal={Acta mathematica hungarica},
  volume={10},
  number={3-4},
  pages={441--451},
  year={1959},
  publisher={Akad{\'e}miai Kiad{\'o}, co-published with Springer Science+ Business Media BV~…}
}

@book{atkinson2012spherical,
  title={Spherical harmonics and approximations on the unit sphere: an introduction},
  author={Atkinson, Kendall and Han, Weimin},
  volume={2044},
  year={2012},
  publisher={Springer Science \& Business Media}
}

@book{mardia2009directional,
  title={Directional statistics},
  author={Mardia, Kanti V and Jupp, Peter E},
  year={2009},
  publisher={John Wiley \& Sons}
}

@article{klartag2023logarithmic,
  title={Logarithmic bounds for isoperimetry and slices of convex sets},
  author={Klartag, Bo'az},
  journal={arXiv preprint arXiv:2303.14938},
  year={2023}
}

@article{diaconis1987dozen,
  title={A dozen de {F}inetti-style results in search of a theory},
  author={Diaconis, Persi and Freedman, David},
  journal={Annales de l'IHP Probabilit{\'e}s et statistiques},
  volume={23},
  number={S2},
  pages={397--423},
  year={1987}
}

@book{dupuis2011weak,
  title={A weak convergence approach to the theory of large deviations},
  author={Dupuis, Paul and Ellis, Richard S},
  year={2011},
  publisher={John Wiley \& Sons}
}

@inproceedings{he2016deep,
  title={Deep residual learning for image recognition},
  author={He, Kaiming and Zhang, Xiangyu and Ren, Shaoqing and Sun, Jian},
  booktitle={Proceedings of the IEEE conference on computer vision and pattern recognition},
  pages={770--778},
  year={2016}
}

@inproceedings{huang2017densely,
  title={Densely connected convolutional networks},
  author={Huang, Gao and Liu, Zhuang and Van Der Maaten, Laurens and Weinberger, Kilian Q},
  booktitle={Proceedings of the IEEE conference on computer vision and pattern recognition},
  pages={4700--4708},
  year={2017}
}

@inproceedings{deng2009imagenet,
  title={Image{N}et: A large-scale hierarchical image database},
  author={Deng, Jia and Dong, Wei and Socher, Richard and Li, Li-Jia and Li, Kai and Fei-Fei, Li},
  booktitle={2009 IEEE conference on computer vision and pattern recognition},
  pages={248--255},
  year={2009},
  organization={Ieee}
}
\bibliographystyle{iclr2026_conference}

\appendix

\section*{LLM Usage}

Portions of this manuscript, including text editing, reference search, ideation, mathematical derivations, and summarization, were assisted by a large language model.  
The model was used interactively to refine exposition, suggest formulations, and check consistency of notation, but all results, proofs, and experiments were implemented and validated by the authors.  
All mathematical claims, experimental details, and citations were independently verified.  
No content was included without author review and approval.  

\section*{Overview}
This appendix provides complete proofs for all propositions, corollaries, lemmas, and theorems, along with additional derivations that did not fit in the main text. We also include supplementary experiments and implementation details. The appendices are organized as follows:

\begin{enumerate}[label=\Alph*.]
    \item Proof and details of the alignment bound (Sec.~\ref{app:hgr}).
    \item Proofs of some regularization surrogate-related claims (Sec.~\ref{app:reg_proof}).
    \item Proof of the alignment-plateau approach. These include general claims, some are used in the regularization surrogate proof as well (Sec.~\ref{app:plateau_proof}).
    \item Discussion about exact alignment bound at plateau (Sec.~\ref{app:exact_align}). 
    \item Experimental details (Sec.~\ref{app:exp}). 
\end{enumerate}

\section{HGR maximal correlation and the alignment bound}
\label{app:hgr}

\subsection{HGR Definition and basic properties}
\label{app:hgr:def}
The Hirschfeld-Gebelein-R\'enyi (HGR) maximal correlation \citep{hirschfeld1935connection,gebelein1941statistische,renyi1959measures} between random variables $X$ and $Y$ is
\begin{equation}
\label{eq:hgr-def}
\rho_m(X,Y)\;:=\;\sup_{\substack{\E[\phi(X)]=\E[\psi(Y)]=0\\ \Var(\phi)=\Var(\psi)=1}}
\E[\phi(X)\psi(Y)]\;\in[0,1].
\end{equation}
An equivalent “explained-variance’’ characterization \citep{gebelein1941statistische,renyi1959measures} is
\begin{equation}
\label{eq:hgr-var}
\rho_m^2(X,Y)
=\sup_{\substack{g\in L^2(p_X)\\ \Var(g(X))>0}}
\frac{\Var\!\big(\E[g(X)\mid Y]\big)}{\Var(g(X))}.
\end{equation}
Here $p_X$ is the marginal law of $X$, and $L^2(p_X)$ denotes the square-integrable (measurable) functions of $X$ under $p_X$.
The numerator is the variance explained by the optimal $L^2$ predictor $\E[g(X)\mid Y]$ and the denominator is its total variance. Hence, the ratio is a (generalized) coefficient of determination, i.e., the fraction of variance of $g(X)$ predictable from $Y$, in $[0,1]$.

HGR satisfies a (multiplicative) data-processing inequality (DPI): if $X - Y - Z$ is a Markov chain, then
\begin{equation}
\label{eq:hgr-dpi}
\rho_m(X,Z)\;\le\; \rho_m(X,Y)\,\rho_m(Y,Z)
\quad\text{\citep{renyi1959measures}}.
\end{equation}
Our representations are normalized ($u,v\in \Sph^{d-1}$), hence bounded and each coordinate is in $L^2$. See also the operator-theoretic interpretation in~\citep{anantharam2013maximal}.

\subsection{Gaussian example}
\label{app:hgr:gauss}

If two random variables \(X\) and \(Y\) are jointly Gaussian, then the HGR maximal correlation between them equals the absolute value of their Pearson correlation coefficient: 
\begin{equation}
\label{eq:hgr-gaussian}
\rho_m(X,Y) = |A|, \qquad A := \frac{\text{Cov}(X,Y)}{\sqrt{\Var(X)\Var(Y)}}.
\end{equation}
This is a special case where the supremum in the HGR definition is achieved by simple linear functions. More precisely, the optimal transformations are just standardized versions of \(X\) and \(Y\) themselves. In other words, nonlinear functions cannot increase correlation beyond the linear one when the joint distribution is Gaussian. This result is well established; see, for example, \citet{bryc2005maximum}.

\subsection{Proof of the alignment bound}
\label{app:hgr:align-proof}
We prove the inequality
\begin{equation}
\label{eq:align-goal}
\E[u \cdot v] \;\le\; \eta_2 \;+\; (1-\eta_2)\,\|m(\mu)\|^2 ,
\end{equation}
for normalized representations $u=\hat f(X)$ and $v=\hat f(Y)$ on $\Sph^{d-1}$, where $m(\mu):=\E[u]=\E[v]$ is their common mean.

\paragraph{Step 1: mean-residual decomposition.}
Since $u$ and $v$ share the same marginal $\mu$, their means coincide:
\begin{equation}
\label{eq:mean}
m(\mu) \;:=\; \E[u] \;=\; \E[v].
\end{equation}
Define residuals
\begin{equation}
\label{eq:residuals}
\tilde u \;:=\; u - m(\mu), 
\qquad 
\tilde v \;:=\; v - m(\mu),
\end{equation}
so that $\E[\tilde u]=\E[\tilde v]=0$. Expanding the inner product yields
\begin{equation}
\label{eq:decompose}
\E[u \cdot v]
\;=\; \E\big[(m(\mu)+\tilde u)\cdot(m(\mu)+\tilde v)\big]
\;=\; \|m(\mu)\|^2 \;+\; \E[\tilde u \cdot \tilde v] .
\end{equation}

The cross terms vanish because $\E[\tilde u] = \E[\tilde v] = 0$, so
\begin{equation}
\label{eq:cross-vanish-1}
\E[m(\mu) \cdot \tilde v] = m(\mu) \cdot \E[\tilde v] = 0,
\end{equation}
and
\begin{equation}
\label{eq:cross-vanish-2}
\E[\tilde u \cdot m(\mu)] = \E[\tilde u] \cdot m(\mu) = 0.
\end{equation}

\paragraph{Step 2: bound the residual correlation via HGR.}
Fix a coordinate $k\in\{1,\dots,d\}$ and set
\begin{equation}
\label{eq:coord-fns}
g_k(X) \;:=\; \tilde u_k, 
\qquad 
h_k(Y) \;:=\; \tilde v_k .
\end{equation}
Then $\E[g_k(X)]=\E[h_k(Y)]=0$ and, by the Markov structure $X - X_0 - Y$\, the DPI for HGR maximal correlation gives
\begin{equation}
\label{eq:hgr-dpi-align}
\rho_m(X,Y)\;\le\;\rho_m(X,X_0)\,\rho_m(X_0,Y)
\;=\;\sqrt{\eta_2}\,\sqrt{\eta_2}
\;=\;\eta_2,
\end{equation}
as in \citet{anantharam2013maximal}.

\noindent
For any real-valued, square-integrable functions \( g(X) \), \( h(Y) \) with zero mean, we can apply the definition of HGR maximal correlation from ~\eqref{eq:hgr-def} together with the Cauchy-Schwarz inequality to obtain:
\begin{equation}
\label{eq:hgr-cs-bound}
\big|\E\big[g(X)\,h(Y)\big]\big|
\;\le\; \rho_m(X,Y)\,\sqrt{\Var(g)\,\Var(h)}.
\end{equation}
This inequality holds even when \( g \) and \( h \) are not normalized, since any such functions can be rescaled to have unit variance. In our case, the random variables \( X \) and \( Y \) are conditionally independent given \( X_0 \), and identically drawn from the same augmentation channel \( \mathcal A(\cdot\mid X_0) \). Therefore, the Markov chain \( X \leftarrow X_0 \rightarrow Y \) holds, and the multiplicative data-processing inequality~\peqref{eq:hgr-dpi-align} gives:
\begin{equation}
\label{eq:hgr-xy-dpi}
\rho_m(X,Y)\;\le\; \rho_m(X,X_0)\,\rho_m(Y,X_0) \;=\; \eta_2.
\end{equation}
Substituting \eqref{eq:hgr-xy-dpi} into \eqref{eq:hgr-cs-bound} yields:
\begin{equation}
\label{eq:hgr-final-bound}
\big|\E\big[g(X)\,h(Y)\big]\big|
\;\le\; \eta_2\,\sqrt{\Var(g)\,\Var(h)}.
\end{equation}

Applying \eqref{eq:hgr-final-bound} to $(g_k,h_k)$ and summing over coordinates,
\begin{equation}
\label{eq:sum-bound}
\E[\tilde u \cdot \tilde v]
\;=\; \sum_{k=1}^d \E[\tilde u_k \tilde v_k]
\;\le\; \eta_2 \sum_{k=1}^d \sqrt{\Var(\tilde u_k)\Var(\tilde v_k)}
\;\le\; \eta_2 \sqrt{\sum_{k=1}^d \Var(\tilde u_k)}\;\sqrt{\sum_{k=1}^d \Var(\tilde v_k)} ,
\end{equation}
where the last step is Cauchy-Schwarz for sequences.

\paragraph{Step 3: compute the marginal variances.}
Because $\|u\|=\|v\|=1$ and $m(\mu)=\E[u]=\E[v]$,
\begin{equation}
\label{eq:sumvar-u}
\sum_{k=1}^d \Var(\tilde u_k)
\;=\; \E\!\big[\|\tilde u\|^2\big]
\;=\; \E\!\big[\|u-m(\mu)\|^2\big]
\;=\; \E\!\big[\|u\|^2\big]-\|m(\mu)\|^2
\;=\; 1 - \|m(\mu)\|^2 ,
\end{equation}
and identically
\begin{equation}
\label{eq:sumvar-v}
\sum_{k=1}^d \Var(\tilde v_k)
\;=\; 1 - \|m(\mu)\|^2 .
\end{equation}

\paragraph{Step 4: conclude.}
Combine \eqref{eq:sum-bound}, \eqref{eq:sumvar-u} and \eqref{eq:sumvar-v} to get
\begin{equation}
\label{eq:residual-final}
\E[\tilde u \cdot \tilde v]
\;\le\; \eta_2 \,\big(1 - \|m(\mu)\|^2\big).
\end{equation}

\section{Regularized Surrogate proofs}
\label{app:reg_proof}

\subsection{Proof of Proposition \ref{prop:radial optimal}}
\label{app:reg_prop_proof}

\begin{proof}
For any encoder $f$ with angular law $\mu$ the KL term satisfies (by the KL chain rule, see e.g. \citealp[Theorem B.2.1]{dupuis2011weak})
\begin{equation}
\KL(\rho\Vert\gamma_\lambda^{B})\ =\ \KL(\mu\Vert\sigma)\ +\ \int \KL\!\big(\kappa(\cdot\mid u)\ \|\ \xi(\cdot\mid u)\big)\,\mu(du),
\end{equation}
where $\rho(dz)=\mu(du)\kappa(dr\mid u)$ and $\gamma_\lambda^{B}(dz)=\sigma(du)\xi(dr\mid u)$ in polar coordinates $z=ru$. Thus, at fixed $\mu$, the KL term is minimized by choosing $\kappa(\cdot\mid u)=\xi(\cdot\mid u)$ $\mu$-a.s., and then $\KL(\rho\Vert\gamma_\lambda^{B})=\KL(\mu\Vert\sigma)$. 
\end{proof}

\subsection{Proof of Lemma \ref{lem:lb_kl}}
\label{app:reg_lem_proof}

\begin{proof}
We can assume $\mu \ll \sigma$, otherwise $\KL(\mu\|\sigma)=+\infty$ and the claim is trivial. The claim is also trivially true if $m(\mu)=0$, so assume $m(\mu)\neq 0$. By the Donsker-Varadhan variational formula \citep[Lemma~1.4.3]{dupuis2011weak}
\begin{equation}
\mathrm{KL}(\mu\|\sigma)
=\sup_{\varphi}\Big\{\E_{u\sim\mu}[\varphi(u)]
-\log \E_{u\sim\sigma}\big[e^{\varphi(u)}\big]\Big\},
\end{equation}
where the supremum is taken over bounded measurable functions $\varphi:\Sph^{d-1}\to\R$. Taking $\varphi(u)=t w\cdot u$ for some unit vector $w\in\R^d$ and $t\in\R$, we have 
\begin{equation}
\mathrm{KL}(\mu\|\sigma)\geq
\E_{u\sim\mu}[tw\cdot u]-\log \E_{u\sim \sigma}\big[e^{tw \cdot u}\big] = tw\cdot m(\mu)-\log \E_{u\sim\sigma}\big[e^{tw \cdot u}\big]\;.
\end{equation}
Suppose we showed that 
\begin{equation}\label{eq:MGF bound}
    \log \E_{u\sim\sigma}\big[e^{tw \cdot u}\big]\leq t^2/a
\end{equation}
for some $a>0$ for every choice of $t$ and $w$. Then picking $t=\frac{a}{2}\|m(\mu)\|$ and $w=m(\mu)/\|m(\mu)\|$, we have 
\begin{equation}
    \mathrm{KL}(\mu\|\sigma)\geq tw\cdot m(\mu)-t^2/a = \frac{a}{2}\|m(\mu)\|^2-\frac{a}{4}\|m(\mu)\|^2 = \frac{a}{4}\|m(\mu)\|^2\;.  
\end{equation} 
It is left to show \eqref{eq:MGF bound} with $a=4C(d-1)$.
Now, since $g(u)=w\cdot u$ is $1$-Lipschitz on the sphere, then by a corollary of L\'{e}vy's isoperimetric inequality, 
for all $s\ge0$,
\begin{equation}
\sigma\left(|g|\geq s\right)\leq 2e^{-\frac{1}{2}(d-1)s^2}\;,
\end{equation}
where we used the fact that the median of $g$ is $0$. 
Since $\E g=0$, this implies that for some universal $C'>0$, 
\begin{equation}
\log \E e^{tg}\leq\frac{2C'^2 t^2}{d-1}
\end{equation}
\citep[Proposition 2.6.1]{vershynin2018high}.
This satisfies \eqref{eq:MGF bound} with $a=\frac{d-1}{2C'^2}$, and taking $C=1/(8C'^2)$, we are done.
\end{proof}

\section{Alignment-plateau proofs}
\label{app:plateau_proof}

\subsection{normalized representations}
\label{app:plateau_proof_norm}
\begin{lemma}[At the plateau the loss reduces to uniformity]
\label{lem:reduces-to-uniformity}
Under Assumption~\ref{assump:plateau}, the population InfoNCE objective \peqref{eq:pop} takes the form
\begin{equation}
\label{eq:J-plateau}
\mathcal J(\mu)\;=\;\Phi(\mu)\;-\;\alpha\,\E[u\!\cdot\! v]
\;=\;\Phi(\mu)\;-\;\alpha\big(\eta_2+r_{\mathrm{plat}}\big).
\end{equation}
hence minimizing $\mathcal J$ over probability laws $\mu$ on $\Sph^{d-1}$ is equivalent to minimizing $\Phi(\mu)$.
Moreover, $\Phi(\mu)$ is uniquely minimized by the uniform law $\sigma$ on $\Sph^{d-1}$.
\end{lemma}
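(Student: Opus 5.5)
The argument has two parts: the reduction, which is essentially bookkeeping, and the uniqueness of the minimizer of $\Phi$, which carries the real content.

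\emph{Reduction.} Under Assumption~\ref{assump:plateau}, the alignment term $\E_{(u,v)\sim\pi}[u\cdot v]$ equals the constant $\eta_2+r_{\mathrm{plat}}$ for every admissible encoder. This constant depends on the augmentation channel, not on $\mu$. Substituting it into \eqref{eq:pop} gives \eqref{eq:J-plateau} directly. The objective $\mathcal J(\mu)$ is therefore $\Phi(\mu)$ shifted by a $\mu$-independent constant, so the two have the same minimizers over $\mathcal P(\Sph^{d-1})$. By the nonatomicity remark in Section~\ref{sec:gaus_reg}, every $\mu$ is realized by some encoder, so minimizing over $\mu$ is legitimate.

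\emph{Uniqueness of the minimizer of $\Phi$.} I would follow the Wang--Isola argument but make the strictness explicit. Write $k(u,v)=e^{\alpha u\cdot v}=e^{\alpha}e^{-\frac{\alpha}{2}\|u-v\|^2}$, a Gaussian kernel restricted to the sphere, and define the potential $U_\mu(u)=\int k(u,v)\,\mu(dv)$. Then $\Phi(\mu)=\int\log U_\mu\,d\mu$. The proof proceeds in three steps.

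\begin{enumerate}
\item \emph{Lower bound by the energy.} By Jensen's inequality ($\log$ is concave),
\[
\Phi(\mu)\le \log\!\int U_\mu\,d\mu=\log\mathcal E(\mu).
\]
This inequality runs the wrong way for a lower bound, so Jensen is not used directly. Instead, use rotational invariance: for $\sigma$, $U_\sigma\equiv c_d:=\E_{v\sim\sigma}e^{\alpha v_1}$ is constant, so $\Phi(\sigma)=\log c_d$. For general $\mu$, bound $\Phi(\mu)-\Phi(\sigma)=\int\log(U_\mu/c_d)\,d\mu$ from below by comparing $U_\mu$ against $\int U_\mu\,d\sigma=c_d$ (Fubini).
\item \emph{Gibbs-type inequality.} Writing $\Phi(\mu)-\log c_d=\int\log\frac{U_\mu}{c_d}\,d\mu$, I would try to show this is $\ge 0$ via the positive-definite expansion of $k$ in spherical harmonics. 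The Funk--Hecke theorem gives
\[
k(u,v)=\sum_{\ell}a_\ell\sum_j Y_{\ell j}(u)Y_{\ell j}(v),\qquad a_\ell>0 \text{ for all } \ell,
\]
since $e^{\alpha t}$ has positive Gegenbauer coefficients. The positivity follows because all Taylor coefficients of $e^{\alpha t}$ are positive.
\item \emph{Fallback and main obstacle.} The cleaner fallback is the limit route used by Wang and Isola: the limiting uniformity loss in their Theorem~1 is minimized by $\sigma$. Since the statement cites them, I would invoke \citep[Theorem~1]{wang2020understanding} for $\Phi(\mu)\ge\Phi(\sigma)$. The main obstacle is uniqueness, which I would extract by characterizing the equality case. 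Equality forces $U_\mu$ to be $\mu$-a.s.\ constant and to coincide with $c_d$. Together with the strict positive definiteness of $k$ (all $a_\ell>0$), this forces every harmonic moment $\int Y_{\ell j}\,d\mu$ with $\ell\ge1$ to vanish. Hence $\mu=\sigma$, because spherical harmonics are dense in $C(\Sph^{d-1})$.
\end{enumerate}

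The delicate point in the last step is converting $\int\log U_\mu\,d\mu=\log c_d$ into the vanishing of the harmonic coefficients. I would handle it with the first-variation condition for a minimizer: perturbing $\mu$ toward $\sigma$ and differentiating at zero, combined with the strict convexity of the quadratic energy $\iint k\,d\mu\,d\mu$ on probability measures.
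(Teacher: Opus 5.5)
Your reduction is the same as the paper's. For uniqueness, the paper gives no argument of its own: it attributes the full statement that $\Phi$ is \emph{uniquely} minimized at $\sigma$ to Wang and Isola (Appendix A). You import only the inequality $\Phi(\mu)\ge\Phi(\sigma)$ from them and try to derive uniqueness yourself, and that derivation has a genuine gap.

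The sentence ``equality forces $U_\mu$ to be $\mu$-a.s.\ constant and equal to $c_d$'' has no support. An equality case can only be read off from an explicit chain of inequalities, and a black-box inequality supplies none. The step after it is fine: if $U_\mu=c_d$ $\mu$-a.s., then $\iint k\,d\mu\,d\mu=c_d$, and strict positive definiteness gives $\mu=\sigma$.

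Your proposed repair does not close the gap. Positive definiteness controls the energy $\mathcal E(\mu)=\iint k\,d\mu\,d\mu$. But Jensen gives $\Phi(\mu)\le\log\mathcal E(\mu)$, which is the wrong direction, so neither $\mathcal E(\mu)\ge c_d$ nor the strict convexity of $\mathcal E$ tells you anything about $\Phi$. You have no convexity of $\Phi$ itself. Its first-order condition,
\[
\log U_\mu(x)+\int k(x,v)\,U_\mu(v)^{-1}\,\mu(dv)\ge\mathrm{const}\quad\text{with equality }\mu\text{-a.e.},
\]
is nonlinear and does not reduce to vanishing harmonic coefficients. Step~2 fails for the same reason: $a_\ell>0$ alone does not make $\int\log(U_\mu/c_d)\,d\mu$ nonnegative.

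To fix this, either cite uniqueness as the paper does, or use an inequality chain whose equality case is transparent. Here is one that works.

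\emph{Factorization.} On the sphere $\|u+v\|^2=2+2u\cdot v$, so $e^{\alpha u\cdot v}=e^{-\alpha}\E[e^{u\cdot Z}e^{v\cdot Z}]$ with $Z\sim\mathcal N(0,\alpha I_d)$. Write $Z=R\Theta$ with $\Theta\sim\sigma$ independent of $R$, and set $c(r)=\int e^{r w_1}\sigma(dw)$, so that $c_d=c(\alpha)$. Then
\[
k(u,v)/c_d=\int_0^\infty\!\!\int q_r(u,w)\,q_r(w,v)\,\sigma(dw)\,\nu(dr).
\]
Here $q_r(u,w)=e^{r u\cdot w}/c(r)$ is the von Mises--Fisher Markov kernel $Q_r$, and $\nu(dr)\propto c(r)^2\,\mathbb P_R(dr)$ is a probability measure on $(0,\infty)$.

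\emph{Jensen step.} Consequently $U_\mu(u)/c_d$ is the average of $m_r(w):=\frac{d(Q_r\mu)}{d\sigma}(w)$ under the probability $\nu(dr)\,q_r(u,w)\,\sigma(dw)$. Applying Jensen and then integrating against $\mu(du)$ gives
\[
\Phi(\mu)-\log c_d\;\ge\;\int_0^\infty \KL(Q_r\mu\,\|\,\sigma)\,\nu(dr)\;\ge\;0 .
\]

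\emph{Equality case.} Equality forces $Q_r\mu=\sigma$ for $\nu$-a.e.\ $r>0$. The Funk--Hecke eigenvalues of $Q_r$ are strictly positive; this is the positivity of the Gegenbauer coefficients of $e^{rt}$ that you already noted. Hence every harmonic moment of $\mu$ of degree $\ge 1$ vanishes, and $\mu=\sigma$.
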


\begin{proof}[Proof.]
At the plateau, $\E[u\!\cdot\! v]$ is the constant in \eqref{eq:alignment-plateau}, so the alignment term is independent of $\mu$, leaving the uniformity potential $\Phi(\mu)$ as the only objective.
By \citet[Appendix~A]{wang2020understanding}, $\Phi$ is uniquely minimized at the uniform distribution on the sphere, i.e.\ $\mu=\sigma$. For consistency, the plateau value in \eqref{eq:alignment-plateau} must be feasible at $\mu=\sigma$.
\end{proof}

\emph{Remark.} In \eqref{eq:alignment-plateau}, $r_{\mathrm{plat}} \le 0$. By ~\eqref{eq:alignment_bound} at $\mu = \sigma$ ($m(\mu) = 0$) the alignment ceiling is $\eta_2$; the plateau value is not guaranteed to be feasible at $\mu=\sigma$ and must be verified. 

\begin{lemma}[Maxwell-Poincaré \citep{diaconis1984asymptotics}]
\label{lem:maxwell}
Let $U_d$ be uniform on $\Sph^{d-1}$, where $U_{d,i}$ denotes the $i$-th coordinate of $U_d$. Fix $k\in\mathbb N$. Then
\begin{equation}
\label{eq:maxwell}
\sqrt{d}(U_{d,1},\ldots,U_{d,k}) \;\Rightarrow\; \mathcal N(0,I_k) \qquad (d\to\infty).
\end{equation}

\end{lemma}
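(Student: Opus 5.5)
The plan is the classical Gaussian-ratio representation of the uniform law on the sphere, followed by the law of large numbers and Slutsky's lemma. Let $G=(G_1,\dots,G_d)\sim\mathcal N(0,I_d)$ be a standard Gaussian vector. By rotation invariance of $\mathcal N(0,I_d)$, the direction $G/\|G\|$ is uniformly distributed on $\Sph^{d-1}$. It is also independent of $\|G\|$, although independence is not needed here. Hence we may realize $U_d \stackrel{d}{=} G/\|G\|$, and
\begin{equation*}
\sqrt{d}\,(U_{d,1},\ldots,U_{d,k}) \;\stackrel{d}{=}\; \frac{(G_1,\ldots,G_k)}{\|G\|/\sqrt d}.
\end{equation*}
Since weak convergence depends only on laws, it suffices to prove convergence of the right-hand side.

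Step 1 handles the numerator. We build all the $G$'s from a single i.i.d. sequence $G_1,G_2,\dots$ of standard normals. With this coupling, the numerator $(G_1,\ldots,G_k)$ does not depend on $d$, and its law is exactly $\mathcal N(0,I_k)$.

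Step 2 handles the denominator. We have $\|G\|^2/d=\frac1d\sum_{i=1}^d G_i^2$, which converges to $\E[G_1^2]=1$ almost surely by the strong law of large numbers. By continuity of the square root, $\|G\|/\sqrt d\to 1$ almost surely, and in particular in probability.

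Step 3 combines the two. Slutsky's lemma, applied through the continuous map $(x,s)\mapsto x/s$ on $\R^k\times(0,\infty)$, gives
\begin{equation*}
\frac{(G_1,\ldots,G_k)}{\|G\|/\sqrt d}\Rightarrow \mathcal N(0,I_k).
\end{equation*}
One technical point is that the map is undefined at $s=0$. This is harmless because $\|G\|>0$ almost surely and the limit of the denominator is $1$.

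There is no serious obstacle. The only point needing care is justifying $G/\|G\|\sim\sigma$. This follows because $\mathcal N(0,I_d)$ is invariant under orthogonal maps $O$, so $OG/\|OG\|=O(G/\|G\|)$ has the same law as $G/\|G\|$, and $\sigma$ is the unique rotation-invariant probability measure on $\Sph^{d-1}$.

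If a quantitative version is wanted, as in the later convergence-rate statement, one would instead compute the exact density of $(U_{d,1},\dots,U_{d,k})$, which is proportional to $(1-\|x\|^2)^{(d-k-2)/2}$ on the unit ball. After rescaling by $\sqrt d$, this density converges pointwise to the Gaussian density, and Scheffé's lemma upgrades this to total variation convergence. This is the Diaconis--Freedman route, which yields the $O(k/d)$ rate; for the qualitative statement above, the Slutsky argument suffices.
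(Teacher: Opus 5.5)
Your proof is correct, but it does not follow the paper's route. The paper gives no proof of this lemma. It cites Diaconis and Freedman (1984) for the limit, then quotes the Diaconis--Freedman (1987) bound $d_{\mathrm{TV}}\le 2(k+3)/(d-k-3)$. That bound implies the lemma, because total variation convergence implies weak convergence.

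Your argument is self-contained and elementary. You realize $U_d$ as $G/\|G\|$, build all the $G$'s from one i.i.d.\ Gaussian sequence so the numerator is fixed, and apply the strong law to $\|G\|^2/d$. With that coupling the ratio actually converges almost surely to $(G_1,\dots,G_k)$, so the Slutsky step is not even needed. Your representation also makes immediate the paper's remark that the lemma holds for any orthonormal $k$-projection, since $PG\sim\mathcal N(0,I_k)$ for any $k\times d$ matrix $P$ with orthonormal rows.

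What your route does not give is a rate, or any mode of convergence stronger than weak convergence. The cited density-based route gives total variation convergence with an explicit $O(k/d)$ bound, and the paper relies on that bound when it discusses finite-$d$ approximations.

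One small correction to your closing remark. Pointwise convergence of the rescaled densities plus Scheff\'e's lemma gives total variation convergence, but no rate by itself. The $O(k/d)$ bound needs an explicit estimate of the density ratio, including the Gamma-function normalizing constants, and that is what Diaconis and Freedman carry out.
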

A concrete rate of convergence was given by \citet{diaconis1987dozen}.
\begin{theorem}\citep{diaconis1987dozen}
If $1\leq k \leq d-4$, then 
\begin{equation}
\label{eq:tv-rate}
d_{\mathrm{TV}}\!\big(\sqrt{d}(U_{d,1},\ldots,U_{d,k}),Z\big)
\;\leq\;\frac{2(k+3)}{d-k-3}\;,
\end{equation}
where $Z \sim \mathcal{N}(0,I_k)$ and $d_{\mathrm{TV}}$ denotes the total variation distance.
\label{thm:conv_rate}
\end{theorem}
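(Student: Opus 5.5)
The plan is to reduce the $k$-dimensional comparison to a one-dimensional comparison of radial laws, and then bound that one-dimensional total variation distance using the explicit densities. This is essentially the route of \citet{diaconis1987dozen}.

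\textbf{Step 1 (reduction to radial laws).} Total variation is invariant under the bijection $x\mapsto x/\sqrt d$. Hence it suffices to compare $U_k:=(U_{d,1},\dots,U_{d,k})$ with $Z/\sqrt d\sim\mathcal N(0,d^{-1}I_k)$. Both laws are rotation invariant in $\R^k$, so their densities are functions of $|x|^2$ alone. For two rotation-invariant laws, the TV distance equals the TV distance of the laws of $|x|^2$, because the angular parts are identical (uniform on $\Sph^{k-1}$) and independent of the radius. The known density of $U_k$ is
\[
p(x)=\frac{\Gamma(d/2)}{\pi^{k/2}\Gamma((d-k)/2)}\,(1-|x|^2)^{(d-k-2)/2}\,\mathbf 1_{\{|x|<1\}},
\]
which is valid since $k\le d-4$. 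From it, $T:=|U_k|^2\sim\mathrm{Beta}(k/2,(d-k)/2)$, while $S:=|Z|^2/d\sim\mathrm{Gamma}(k/2,\text{rate } d/2)$. So we must bound $d_{\mathrm{TV}}(T,S)$.

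\textbf{Step 2 (likelihood ratio).} Both densities share the factor $t^{k/2-1}$, so the ratio $q(t)=p_T(t)/p_S(t)$ on $(0,1)$ is
\[
\log q(t)=c_{d,k}+\tfrac{d-k-2}{2}\log(1-t)+\tfrac d2 t,
\]
where $c_{d,k}=\log\!\big[\Gamma(d/2)(2/d)^{k/2}/\Gamma((d-k)/2)\big]$. This function is concave in $t$, so $\{q>1\}$ is an interval $(t_-,t_+)$. Then
\[
d_{\mathrm{TV}}=\int(p_T-p_S)^+\,dt=\int_{t_-}^{t_+}(p_T-p_S)\,dt .
\]
Equivalently, $d_{\mathrm{TV}}=\int (1-q)^+\,p_S$. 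I would then bound $1-q\le -\log q$ and write
\[
d_{\mathrm{TV}}\le \E_S\big[(\log q(S))^-\big]+P(S\ge1).
\]

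\textbf{Step 3 (estimates).} First, $c_{d,k}$ must be controlled. Using Gamma-ratio inequalities (Wendel/Gautschi type), I expect $c_{d,k}\ge -\tfrac{k(k+2)}{4d}$ up to $O(1/d)$ corrections. Next, I would expand the $t$-dependent part around its typical value $S\approx k/d$: $\tfrac{d-k-2}{2}\log(1-t)+\tfrac d2 t\ge \tfrac{k+2}{2}t-\tfrac{d}{4}\tfrac{t^2}{1-t}$. Taking expectations under the Gamma law uses $\E S=k/d$ and $\E S^2=k(k+2)/d^2$. After this, the leading terms cancel and the deficit becomes of order $(k+3)/(d-k-3)$. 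The contribution from $S$ near or beyond $1$ is exponentially small and can be absorbed.

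\textbf{Main obstacle.} The delicate part is Step 3: obtaining exactly the constant $2(k+3)/(d-k-3)$ rather than merely $O(k/d)$. That requires sharp two-sided bounds on $\Gamma(d/2)/\Gamma((d-k)/2)$ and care with the tail $t\to1$, where $\log(1-t)$ blows up. If the crude route via $1-q\le-\log q$ loses constants, I would instead compute $\int_{t_-}^{t_+}(p_T-p_S)$ exactly as a difference of Beta and Gamma CDFs on $(t_-,t_+)$. I would locate $t_\pm$ from the first-order condition of $\log q$ and bound each CDF difference by one-sided Stirling estimates, following \citet{diaconis1987dozen}.
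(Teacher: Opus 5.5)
The paper gives no proof here; it only cites Diaconis--Freedman. So the real question is whether your sketch would establish the inequality. Steps~1--2 are correct: the radial reduction, the laws $T\sim\mathrm{Beta}(k/2,(d-k)/2)$ and $S\sim\mathrm{Gamma}(k/2,\text{rate }d/2)$, the formula for $\log q$ and its concavity, and $d_{\mathrm{TV}}\le \mathbb{E}_S[(\log q(S))^-]+P(S\ge 1)$. The gap is in Step~3, and it is not only a matter of constants: Step~3 estimates the wrong functional.

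\textbf{The gap in Step~3.} You take expectations of your lower bound $L(S)\le \log q(S)$ using $\mathbb{E}S=k/d$ and $\mathbb{E}S^2=k(k+2)/d^2$, and note that the leading terms cancel. That computes $\mathbb{E}_S[L(S)]$, but you need $\mathbb{E}_S[L(S)^-]$. Since $x\mapsto x^-$ is convex, $\mathbb{E}[L^-]\ge(\mathbb{E}L)^-$, so a nearly vanishing mean gives no upper bound at all. The two quantities also differ in order:
\begin{itemize}
\item $\mathbb{E}_S[\log q(S)\mathbf{1}_{\{S<1\}}]$ is, up to the tail, minus a Kullback--Leibler divergence between the radial laws. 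It is of size about $k(k+2)/(4d^2)$, so the deficit left after your cancellation is not of order $(k+3)/(d-k-3)$.
\item Writing $S=k/d+\varepsilon$, one has $\log q\approx \frac{k}{2d}+\varepsilon-\frac d4\varepsilon^2$ with $\varepsilon$ of size $\sqrt{2k}/d$. So $\log q(S)$ fluctuates on the scale $k/d$, and its negative part has mean of order $k/d$.
\end{itemize}
If your computation were valid it would give $d_{\mathrm{TV}}=O(k^2/d^2)$. That is false already for $k=1$, where the distance is of exact order $1/d$.

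There are further problems with the specific estimates:
\begin{itemize}
\item Your lower bound is not integrable. Since $p_S(1)>0$, $\mathbb{E}_S[S^2/(1-S);\,S<1]=\infty$, so $\mathbb{E}_S[L(S)^-]=\infty$ unless you truncate.
\item The appeals to ``$O(1/d)$ corrections'' and an ``exponentially small'' tail are asymptotic. The statement is an explicit inequality for every $1\le k\le d-4$.
\item In your fallback, the first-order condition of $\log q$ locates its maximizer, not the crossing points $t_\pm$.
\end{itemize}

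\textbf{How to close it.} The missing idea is to bound the other half of the total variation, where your concavity observation does the work:
\[
d_{\mathrm{TV}}=\int (q-1)^+p_S\,dt\le \sup_t q(t)-1 .
\]
The region $t\ge 1$ contributes nothing here, because $q=0$ there. The condition $\frac{d-k-2}{2(1-t)}=\frac d2$ gives the maximizer $t^*=(k+2)/d$, hence
\[
\sup_t q(t)=\frac{\Gamma(d/2)}{\Gamma((d-k)/2)}\Big(\frac2d\Big)^{k/2}\Big(1-\frac{k+2}{d}\Big)^{(d-k-2)/2}e^{(k+2)/2}.
\]
Use Stirling's formula $\Gamma(x)=\sqrt{2\pi}\,x^{x-1/2}e^{-x+\theta(x)}$ with $0<\theta(x)<1/(12x)$, and set $m=d-k\ge 4$. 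Then
\[
\log\sup_t q(t)\le \tfrac12\log\tfrac{d}{m}+\tfrac{m-2}{2}\log\big(1-\tfrac2m\big)+1+\tfrac{1}{6d}\le \frac{k+2}{2m}+\frac{2}{m^2}+\frac{1}{6d}.
\]
Since $2(k+3)/(m-3)\ge 1$ unless $m>2k+9$, you may assume $m\ge 2k+10$. Then $x:=\log\sup_t q(t)\le \frac{k+3}{2m}<\frac14$, and
\[
e^x-1\le \tfrac43 x\le \frac{2(k+3)}{3m}\le \frac{2(k+3)}{d-k-3}.
\]
This route needs only a one-sided Gamma-ratio bound. It uses no moments of $S$ and no tail estimate, and it is non-asymptotic and uniform in $k$.
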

Clearly, Lemma~\ref{lem:maxwell} and Theorem~\ref{thm:conv_rate} hold for any $k$ indices, or for any orthonormal projection of $U_d$ to $k$ dimensions.
Combining Lemmas~\ref{lem:reduces-to-uniformity} and \ref{lem:maxwell}, we get Corollary~\ref{cor:gauss-k-simple}.

\subsection{Unnormalized representations}
\label{app:plateau_proof_unnorm}
We now prove Proposition~\ref{prop:unnorm-unif} by reducing to the normalized case established above.

\begin{proof}
Let $z=f(X)\in\R^d$ denote the unnormalized representation and write its polar decomposition as
$z = r\,u$ with $r=\|z\|>0$ and $u:=z/\|z\|\in\Sph^{d-1}$.
By Lemma~\ref{lem:reduces-to-uniformity}, at the alignment plateau the population objective reduces to minimizing $\Phi(\mu)$, whose unique minimizer is the uniform law $\sigma$ on $\Sph^{d-1}$. Hence the angular component of any global minimizer satisfies $u \sim \sigma$ on $\Sph^{d-1}$.
Assumption~\ref{assump:thin-shell} further gives thin-shell concentration of the radius: $r \xrightarrow[d\to\infty]{\mathsf P} r_0 \in (0,\infty)$.

For any fixed $k\ge1$ and any fixed $k$-dimensional subspace, let $P_k$ be the corresponding orthogonal projector and set $u_k:=P_k u$.
By the Maxwell-Poincar\'e spherical CLT (Lemma~\ref{lem:maxwell}),
\begin{equation}
\label{eq:angle-clt}
\sqrt{d}\,u_k \;\Rightarrow\; \mathcal N(0,I_k) \qquad (d\to\infty).
\end{equation}

Let $z_k:=P_k z = r\,u_k$. Since $r \xrightarrow[d\to\infty]{\mathsf P} r_0$ and \eqref{eq:angle-clt} holds, Slutsky’s theorem \citep{van2000asymptotic} yields
\begin{equation}
\label{eq:slutsky-zk}
\sqrt{d}\,z_k \;=\; r\,\sqrt{d}\,u_k \;\Rightarrow\; \mathcal N\!\big(0,\,r_0^2 I_k\big) \qquad (d\to\infty).
\end{equation}
This proves Proposition~\ref{prop:unnorm-unif}.
\end{proof}

\section{Exact alignment bound in plateau discussion}
\label{app:exact_align}

\begin{figure}[t]
    \centering
    \includegraphics[width=1.0\linewidth]{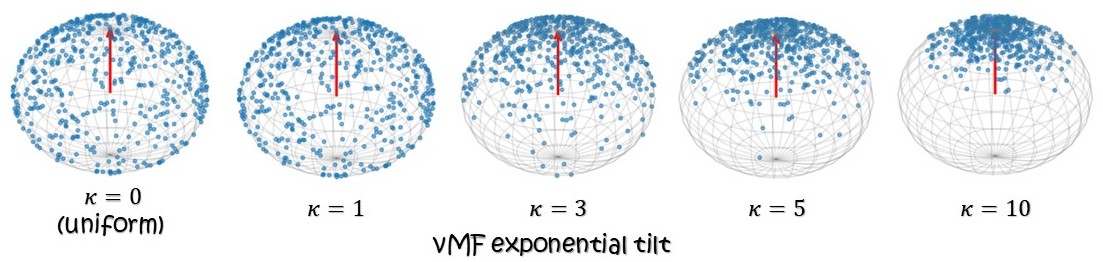}
    \caption{\textbf{vMF exponential tilt distribution for different concentration scales kappa ($\kappa$).}}
    \label{fig:vmf}
\end{figure}

The following analysis begins from the alignment ceiling \peqref{eq:alignment_bound}: under a generalized plateau assumption (extending Assumption~\ref{assump:plateau}), the expected alignment is determined by the augmentation mildness $\eta_2$ and the squared mean norm $\|m(\mu)\|^2$, up to a negligible residual (noted as $r_{plat}$ in \eqref{eq:alignment-plateau}).  
Substituting this relation into the population InfoNCE objective \peqref{eq:pop} yields the surrogate
\begin{equation}
\label{eq:Jlambda-app2}
\mathcal J_q(\mu)
\;=\;
\Phi(\mu)\;-\;q\,\|m(\mu)\|^2,
\qquad
q \;=\; \alpha(1-\eta_2),
\end{equation}
where $\Phi(\mu)$ is the uniformity potential of \citet{wang2020understanding}.
Thus, at the plateau, the population loss reduces to a trade-off between uniformity and the mean vector length.  

\paragraph{Stationary points.}
The surrogate involves the spherical convolution operator $P$ with kernel $e^{\alpha \xi\cdot\eta}$, which diagonalizes in spherical harmonics by the Funk-Hecke theorem \citep{atkinson2012spherical}.  
Analyzing the Euler-Lagrange condition shows that in high dimensions $Ph$ must asymptotically take an exponential tilt form $Ph(\xi)\propto \exp(\beta w\cdot \xi)$.  
Inverting this relation via Gegenbauer expansions and their decay properties \citep{szeg1939orthogonal} indicates that, under mild regularity, the stationary density $h$ is well-approximated in its leading modes by either the uniform law or a von~Mises-Fisher (vMF) tilt \citep{mardia2009directional}. This captures the dominant low-degree structure in high dimensions, though more complex stationary forms cannot be excluded.

\paragraph{Implications.}
Consequently, in high dimension the stationary points of the plateau surrogate are \emph{well-approximated} by either the uniform distribution (when $m(\mu)=0$) or a von Mises-Fisher (vMF) tilt aligned with an axis $w$ (when $m(\mu)\neq 0$); see Fig.~\ref{fig:vmf}. 
The vMF concentration parameter $\kappa$ quantifies the strength of angular concentration around $w$ (larger $\kappa$ $\Rightarrow$ narrower cone). 
This perspective helps explain why contrastive encoders often yield nearly uniform representations, with occasional vMF-like bias. For example, in CLIP, where a narrow-cone structure (a modality-dependent angular bias) has been observed \citep{liang2022mind}.

\section{Experimental details}
\label{app:exp}

\subsection{Implementation details}
\label{app:impl}

\paragraph{Code and reproducibility.}
Code is available \href{https://github.com/rbetser/InfoNCE-induces-Gaussian-distribution}{here}.
All experiments were implemented in \texttt{PyTorch} with \texttt{torchvision}.
Training was performed on a single 3090 NVIDIA RTX GPU with CUDA 11.8.

\paragraph{Synthetic Laplace data experiments.}
\begin{itemize}
    \item \textbf{Dataset.} Laplace$(0,1)$ vectors of dimensions - $d_{data} = 1024$. We use a set of 20k samples for training, and 5k samples for testing.
    \item \textbf{Representation dimensions.} The dimensions of representations vary: $d \in \{32,64,128,256\}$.
    \item \textbf{Batch size.} Batch size in our experiments varies: $N \in \{8, 16, 32, 48, 64, 96, 128\}$.
    \item \textbf{Training objective.} InfoNCE loss with temperature $\tau \in \{0.1, 0.2\}\,$. We report results for $\tau  = 0.1$, but note that results are similar.
    \item \textbf{Augmentations.}
    Each synthetic sample $x$ is perturbed to form two correlated views
    \begin{equation}
    \label{eq:aug_views}
    x_1 \;=\; A\,x + \sqrt{1-A^2}\,\varepsilon_1,
    \qquad
    x_2 \;=\; A\,x + \sqrt{1-A^2}\,\varepsilon_2,
    \end{equation}
    where $\varepsilon_1,\varepsilon_2 \sim \mathcal N(0,I)$ are independent.
    The parameter $A \in (0,1)$ controls the correlation between views.
    After this linear Gaussian mixing, we apply light, independent jitter:
    additive Gaussian noise with std $0.2$, feature dropout with probability $0.1$, 
    and random multiplicative scaling by $\exp(\mathcal N(0,0.1^2))$.
    Unless otherwise stated, we use $A=0.6$ (results for $A \in \{0.2,0.5,0.8\}$ appear in Fig.~\ref{fig:likelihood_correlation}).
    \item \textbf{Optimization.} Optimizer: Adam. Learning rate $=\, 10^{-3}$. We ran 50-250 epochs depending on setup; unless stated otherwise, we report results at 150 epochs.
    \item \textbf{Evaluation metrics.} 
    norm concentration (CV), mean norm values, Gaussianity diagnostics (AD/DP) tests and uniformity vs.\ alignment comparison (based on cosine similarity).
\end{itemize}

\noindent \textbf{Additional synthetic data experiments.}

For the Gaussian mixture setting:
\begin{itemize}
    \item \textbf{Dataset.} We generate 10k samples in $\mathbb{R}^d$ from a mixture of 25 equally weighted Gaussian components (1024 dimensions) with randomly sampled means and shared isotropic covariance. The component means are drawn independently at initialization and fixed throughout training.
    \item \textbf{Augmentation.} Positive pairs are generated by independently sampling two views from the same underlying mixture component.
    \item \textbf{Training.} A linear encoder maps inputs to a 256-dimensional representation space and is trained using the InfoNCE objective for 100 epochs.
    \item \textbf{Evaluation metrics.} Normality diagnostics include norm concentration (CV) and coordinate-wise AD and DP statistics.
\end{itemize}

For the discrete binary setting:
\begin{itemize}
    \item \textbf{Dataset.} Each sample is a sparse binary vector of dimension 1024.
    \item \textbf{Augmentation.} Positive pairs are generated by independently flipping a small fraction (0.1\%) of zero entries to ones.
    \item \textbf{Training.} A linear encoder maps inputs to a 256-dimensional representation and is trained using the InfoNCE objective for 100 epochs. 
    \item \textbf{Evaluation metrics.} Every 10 epochs, we evaluate normality diagnostics, including norm concentration (CV) and coordinate-wise AD and DP statistics.
\end{itemize}

\paragraph{CIFAR-10 experiments}
\begin{itemize}
    \item \textbf{Dataset.} CIFAR-10, training set size 50k, test set size 10k.
    \item \textbf{Augmentations.} We apply the standard SimCLR-style augmentation pipeline: a random resized crop to $32\times32$ pixels with scale uniformly sampled from $(0.2,1.0)$, a random horizontal flip, color jitter with strengths $(0.8,0.8,0.8,0.2)$, and random conversion to grayscale with probability $0.2$.
    \item \textbf{Architecture.} Two experimental settings: (i) a basic encoder composed of a two-layer MLP with nonlinearity, and (ii) a ResNet-18 encoder following the SimCLR protocol for the contrastive setting and trained with standard cross-entropy for the supervised setting (pretrained on ImageNet~\citep{deng2009imagenet}).
    \item \textbf{Training objective.} InfoNCE with temperature $\tau= 0.1$ (cross-entropy for the supervised setting).
    \item \textbf{Optimization.} Adam optimizer, learning rate $= 10^{-3}$, weight decay = $10^{-4}$, batch size $= 256$, epochs = 100.
    \item \textbf{Evaluation metrics.}  norm concentration (CV), Gaussianity diagnostics (AD/DP) tests.
\end{itemize}

\paragraph{Pretrained model diagnostics}
\begin{itemize}
    \item \textbf{Models.} CLIP (ViT-L/14, text and image modalities), DINO (ViT-B/32), ResNet-34 and DenseNet.
    \item \textbf{Datasets.} Full MS-COCO validation set (5k images) and the full ImageNet-R benchmark when noted.
    \item \textbf{Feature extraction.} Last-layer embeddings; whitening applied when noted.
    \item \textbf{Evaluation metrics.}  norm concentration (CV), Gaussianity diagnostics (AD/DP) tests and uniformity before and after whitening.
\end{itemize}

\subsection{Additional Experiments}
\label{app:more_exp}

\paragraph{Thin-shell concentration in pretrained models.}
Fig.~\ref{fig:thin_shell} visualizes the radius distributions of representations from supervised (DenseNet, ResNet34) and self-supervised (CLIP image/text, DINO) pretrained models on MS-COCO. All models exhibit thin-shell concentration, with radius values tightly clustered around a characteristic norm. Notably, contrastive models display significantly stronger concentration (lower CV) than supervised counterparts, consistent with the Gaussian diagnostics reported in Table~\ref{table:gaussian_pretrained}. This reinforces the underlying near-Gaussian structure observed in self-supervised representations.
\begin{figure}[t]
    \centering
    \includegraphics[width=0.99\linewidth]{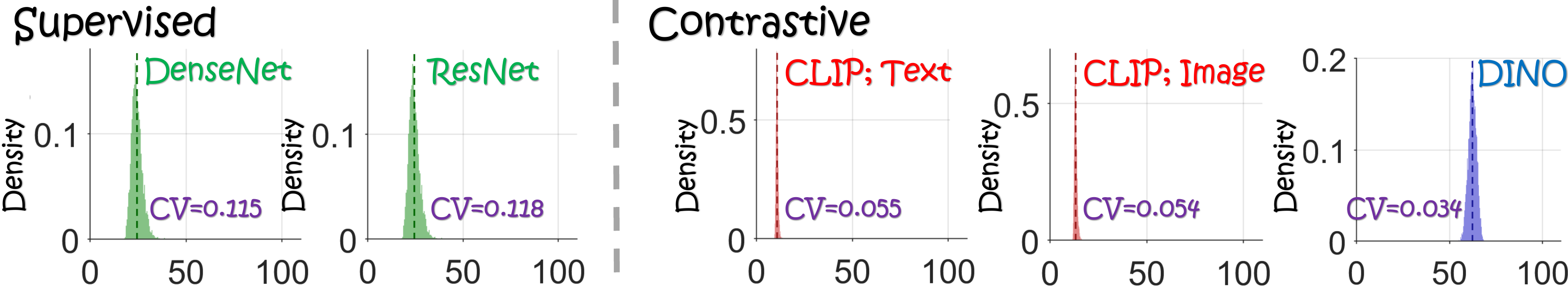}
    \caption{\textbf{Thin-shell concentration across pretrained models.} 
    Radius distributions of representations from supervised models (DenseNet, ResNet) and contrastive models (CLIP, DINO). 
    All models exhibit thin-shell concentration, with contrastive methods showing tighter clustering (lower CV, \eqref{eq:cv-radius}).}

    \label{fig:thin_shell}
\end{figure}

\begin{figure}[ht]
    \centering
    \includegraphics[width=1.0\linewidth]{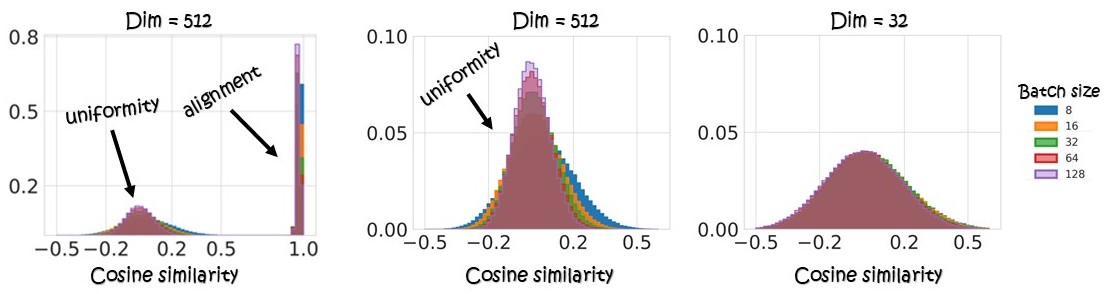}
    \caption{\textbf{Alignment and uniformity vs.\ batch size.}
    Histogram view of cosine similarities for positive pairs (alignment) and negatives (uniformity), corresponding to Fig.~\ref{fig:uni_align_violin}.
    As batch size increases, alignment remains high while uniformity improves, with negative-pair similarities concentrating near zero.
    The middle panel is a zoom of the left; the right panel shows that at very low dimensionality, increasing batch size yields little uniformity gain.}
    \label{fig:uni_align_hist_b}
\end{figure}

\begin{figure}[ht]
    \centering
    \includegraphics[width=1.0\linewidth]{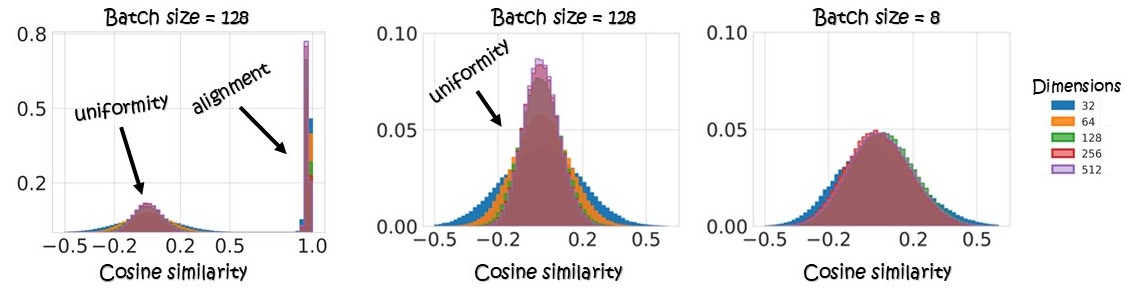}
    \caption{\textbf{Alignment and uniformity vs.\ dimensionality.}
    Histogram view of cosine similarities for positive pairs (alignment) and negatives (uniformity), corresponding to Fig.~\ref{fig:uni_align_violin}.
    As dimensionality increases, alignment stays high while uniformity improves, pushing negative-pair similarities toward zero.
    The middle panel is a zoom of the left; the right panel highlights that with very small batch sizes, increasing dimensionality offers limited uniformity improvement.}
    \label{fig:uni_align_hist_d}
\end{figure}

Figs.~\ref{fig:uni_align_hist_b} and \ref{fig:uni_align_hist_d} provide alternative visualizations of Fig.~\ref{fig:uni_align_violin}, presenting the same experiments with a different display.  
Both figures plot the distributions of cosine similarities for positive pairs (alignment) and for negatives (uniformity).  
As batch size (Fig.~\ref{fig:uni_align_hist_b}) or dimensionality (Fig.~\ref{fig:uni_align_hist_d}) increases, uniformity improves (negative-pair similarities concentrate near zero) while alignment remains consistently high across settings.  
These complementary views reinforce the observation from the main body: uniformity continues to improve with larger batches and higher dimensions, whereas alignment appears to saturate early.

\begin{figure}[ht]
    \centering
    \includegraphics[width=1.0\linewidth]{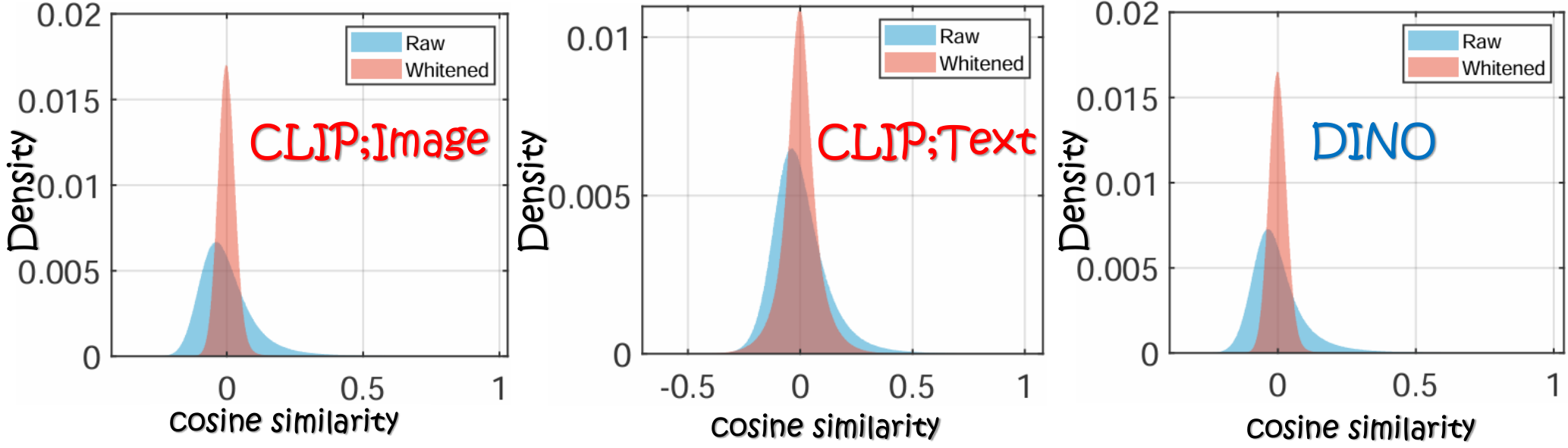}
    \caption{\textbf{Whitening and uniformity: unnormalized representations.} 
    Cosine similarity histograms of negatives for CLIP (image, text) and DINO, before (raw) and after whitening. 
    Unnormalized representations benefit from whitening, with distributions pushed closer to zero, reflecting enhanced uniformity. The y-axis (``Density'') represents the relative count in each bin (normalized by the total number of samples) rather than the probability density function.}

    \label{fig:uniformity_pretrained}
\end{figure}

\begin{figure}[ht]
    \centering
    \includegraphics[width=1.0\linewidth]{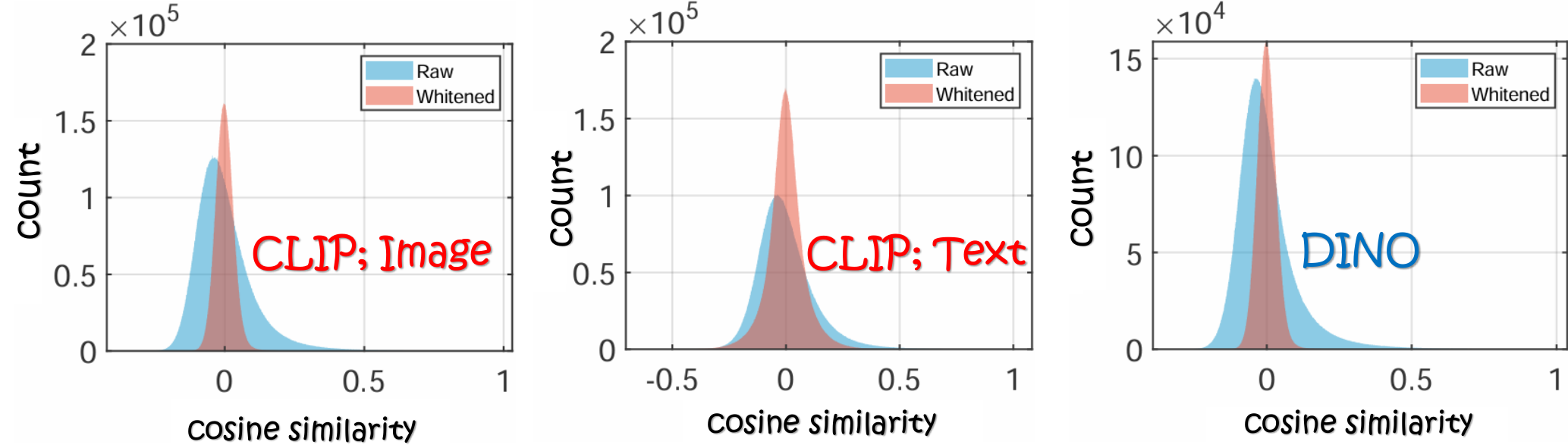}
    \caption{\textbf{Whitening and uniformity: normalized representations.} 
    Cosine similarity histograms of negatives for CLIP (image, text) and DINO, before (normalized) and after whitening. 
    Normalized representations are already close to uniform; whitening provides a modest but consistent improvement.}
    \label{fig:uniformity_normalized}
\end{figure}

Additionally, we assess uniformity in several pretrained models before and after whitening. 
Whitening consistently increases uniformity, indicating that these representations, which are already close to uniform (and approximately Gaussian; see Table~\ref{table:gaussian_pretrained}), become more isotropic once decorrelated and rescaled. This effect holds consistently across pretrained models (CLIP image, CLIP text, and DINO), for both normalized and unnormalized representations, see Figs.~\ref{fig:uniformity_pretrained}, \ref{fig:uniformity_normalized}. Thus, a simple post hoc projection via whitening can further enhance uniformity in practice.

\begin{figure}[ht]
    \centering
    \includegraphics[width=0.9\linewidth]{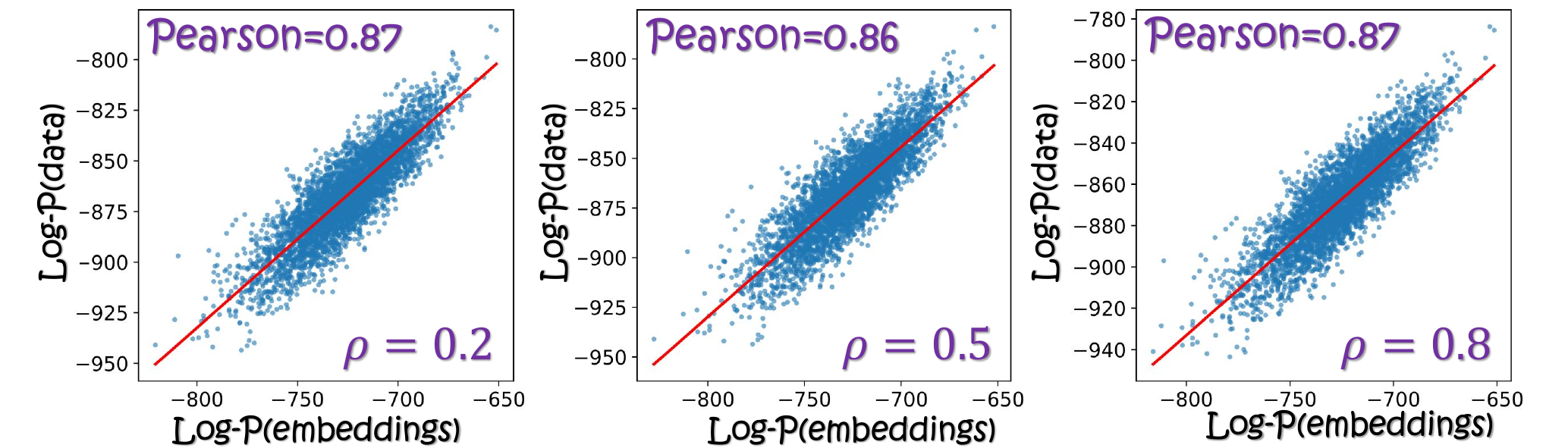}
    \caption{\textbf{Encoder ``pushforward''.} On synthetic data, the encoder maps Laplace-distributed inputs to approximately Gaussian representations. Because both source and target families admit tractable likelihoods, we can score entire sets and observe consistently high correlation across different augmentation strengths.}
    \label{fig:likelihood_correlation}
\end{figure}

We examine the correlation between the data distribution and the representation distribution. 
Using Laplace data as input and observing Gaussian representations at the output, we can compute likelihoods for both input and output sets.  
Comparing these scores reveals strong correlation (Fig.~\ref{fig:likelihood_correlation}), indicating that the distribution is indeed ``pushed forward'' through the encoder.  
This correlation remains stable across different augmentation strengths, showing that this ``pushforward'' behavior is insensitive to the level of augmentation.

\end{document}